\newtheorem{theorem}{Theorem}[section]
\newtheorem{claim}[theorem]{Claim}
\newtheorem{lemma}[theorem]{Lemma}
\newtheorem{remark}[theorem]{Remark}
\newtheorem{definition}[theorem]{Definition}
\newcommand{\xxrightarrow}[2][]{\ext@arrow 0359\rightarrowfill@{#1}{#2}}
\numberwithin{equation}{section}
\newcommand{\norm}[1]{\left\| #1 \right\|}
\begin{document}

\title{Depth-Width Trade-offs for ReLU Networks\\ via Sharkovsky's Theorem}

\author{
Vaggos Chatziafratis\\\texttt{Stanford University}\\\texttt{vaggos@cs.stanford.edu}
\and Sai Ganesh Nagarajan\\\texttt{SUTD}\footnote{Singapore University of Technology and Design.}\\\texttt{sganesh.22@gmail.com}
\and Ioannis Panageas\\\texttt{SUTD}$^*$\\\texttt{ioannis@sutd.edu.sg}
\and Xiao Wang\\\texttt{SUTD}$^*$\\\texttt{xiao\_wang@sutd.edu.sg}
}


\date{}

\maketitle

\begin{abstract}
	Understanding the representational power of Deep Neural Networks (DNNs) and how their structural properties (e.g., depth, width, type of activation unit) affect the functions they can compute, has been an important yet challenging question in deep learning and approximation theory. 
In a seminal paper, Telgarsky highlighted the benefits of depth by presenting a family of functions (based on simple triangular waves) for which DNNs achieve zero classification error, whereas shallow networks with fewer than exponentially many nodes incur constant error. Even though Telgarsky’s work reveals the limitations of shallow neural networks, it doesn’t inform us on \textit{why} these functions are difficult to represent and in fact he states it as a tantalizing open question to characterize those functions that cannot be well-approximated by smaller depths.

In this work, we point to a new connection between DNNs expressivity and Sharkovsky's Theorem from dynamical systems, that enables us to characterize the depth-width trade-offs of ReLU networks for representing functions based on the presence of a generalized notion of \textit{fixed} points, called \textit{periodic} points (a fixed point is a point of period 1). Motivated by our observation that the triangle waves used in Telgarsky's work contain points of period 3 -- a period that is special in that it implies chaotic behaviour based on the celebrated result by Li-Yorke -- we proceed to give general lower bounds for the width needed to represent periodic functions as a function of the depth. 
Technically, the crux of our approach is based on an eigenvalue analysis of the dynamical systems associated with such functions. 

\end{abstract}

\section{Introduction}

In approximation theory, one typically tries to understand how to best approximate a complicated family of functions using simpler functions as building blocks. For instance, \cite{weierstrass1885analytische} proved a general result stating that every continuous function can be uniformly approximated as closely as desired by a polynomial. It wasn’t until later that \cite{vitushkin1959estimation} gave quantitative bounds between the approximation error and the polynomial’s degree. Drifting away from polynomials and given the recent breakthroughs of deep learning in a variety of difficult tasks like image classification, natural language processing, game playing and self-driving cars, researchers have tried to understand the approximation theory that governs neural networks. This question of \textit{neural network expressivity}, i.e. how architectural properties like the depth, width or the activation units affect the functions it can compute, has been a fundamental ongoing challenge with a rich history. A classical result by \cite{cybenko1989approximation}, \cite{hornik1989multilayer}, \cite{fukushima1980neocognitron} demonstrates the expressive power of neural networks: it states that even two layered neural networks (using well known activation functions) can approximate any continuous function on a bounded domain. The caveat is that the size of such networks may be exponential in the dimension of the input, which makes them highly susceptible to overfitting as well as impractical, since one can always add extra layers in their model aiming at increasing the representational power of the neural network.

More recently, in a seminal paper by Telgarsky \cite{Telgarsky16}, it was shown that there exist functions that can be represented by DNNs, i.e, by some particular choice of weights on their edges (and for a wide variety of standard activation units in their layers), yet cannot be approximated by shallow networks unless they are exponentially large. More concretely, he showed that for any positive integer $k$, there exist neural networks with $\Theta(k^3)$ layers, $\Theta(1)$ nodes per layer, and $\Theta(1)$ distinct parameters which cannot be approximated by networks with $\mathcal{O}(k)$ layers, unless they have $\Omega(2^k)$ nodes. At a high level, he uses the number of oscillations present in certain functions as a notion of ``complexity" that distinguishes between deep and shallow networks' representation capabilities via the following three facts: a) functions with few oscillations poorly approximate functions with many oscillations,  b) functions computed by networks with few layers must have few oscillations and c) functions computed by networks with many layers can have many oscillations.

Our main contribution is a novel connection between the theory of dynamical systems and the representational power of DNNs via the well-studied notion of \textit{periodic points}, a notion that captures the important notion of \textit{fixed points} of a continuous function.

\begin{definition}[Period]
We say that a (continuous) Lipschitz function $f:[0,1] \to [0,1]$ contains a point of period $n \ge 1$ if there exists a point $x_0 \in [0,1]$ such that\footnote{As usual, $f^{n}(x_0)$ denotes the composition of $f$ with itself $n$ times, evaluated at point $x_0$.}:
	\begin{align*}\tag{point of period $n$}\label{eqn:period_n}
	f^{n}(x_0)&=x_0 \text{ \quad and}\\
	f^{k}(x_0) &\neq x_0, \;\; \forall\;\; 1 \leq k \leq n-1.
	\end{align*}
In particular, all numbers in $C = \{x_0,f(x_0), f(f(x_0)),\dots, f^{n-1}(x_0)\}$ are distinct, each of which is a point of period $n$ and the set $C$ is called a cycle (or orbit) of period $n$. Observe that since $f:[0,1] \to [0,1]$ is continuous, it certainly has at least one point of period 1, which is called a \textit{fixed point}.
\end{definition}

For the rest of this paper, we focus on (continuous) Lipschitz functions $f:[0,1] \to [0,1]$, unless otherwise stated. Note that the choice of interval $[0,1]$ is for simplicity of our presentation and that our results will hold for any closed interval $[a,b]$.

As we observe, points of period 3 are contained in both \cite{Telgarsky16} and \cite{schmitt2000lower} constructions and this could as well have been a coincidence, however we show that \textit{the existence of periodic points of certain periods are actually one of the reasons explaining why depth is needed to represent functions that contain them (otherwise exponential width is required)}. Towards this direction, we will make use of a deep result in the literature of iterated dynamical systems called Sharkovsky's Theorem \cite{sharkovsky1964coexistence,sharkovsky1965cycles}.

\subsection{Sharkovsky's Theorem}
Consider the set of positive natural numbers $\mathbb{N}^* = \{1,2,\dots \}$ and define the following (decreasing) ordering $\triangleright$ called \textit{Sharkovsky's ordering} as follows:
\begin{equation*}
\begin{array}{c}
3 \triangleright 5 \triangleright 7 \triangleright \dots \triangleright \textrm{ (odd numbers bigger than one)} \\
\triangleright 2 \cdot 3 \triangleright 2\cdot 5 \triangleright 2\cdot 7 \triangleright \dots \triangleright\textrm{ (odd multiples of two but not two)}\\
\triangleright 2^2 \cdot 3 \triangleright 2^2\cdot 5 \triangleright 2^2\cdot 7 \triangleright \dots \triangleright\textrm{ (odd multiples of four but not four)}\\
\vdots\\
\triangleright \dots \triangleright 2^4 \triangleright 2^3 \triangleright 2^2 \triangleright 2 \triangleright 1 \textrm{ (powers of two in decreasing order)}.
\end{array}
\end{equation*}

This is a total ordering; we write $l\triangleright r$ or $r\triangleleft l$ whenever $l$ is to the left of $r$. Sharkovsky showed that this ordering describes which numbers can be periods for a
continuous map on an interval; allowed periods need to be a suffix of the Sharkovsky ordering:

\begin{theorem}[Sharkovsky ``Forcing'' Theorem \cite{sharkovsky1964coexistence,sharkovsky1965cycles}]
\label{thm:sharkovsky}
Let $I$ be a closed interval and $f : I \to I$ be a continuous map.  If $n$ is a period for $f$ and $n \triangleright n'$, then $n'$ is also a period for $f$.
\end{theorem}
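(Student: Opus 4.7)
The plan is to associate to any period-$n$ orbit of $f$ a directed graph that records the covering relations between the consecutive intervals determined by the orbit points, and then to extract every period forced by Sharkovsky's ordering as an admissible closed walk in this graph. Concretely, I would fix a period-$n$ orbit $C=\{x_0,f(x_0),\ldots,f^{n-1}(x_0)\}$, relabel its elements in increasing order as $y_1<y_2<\cdots<y_n$, and form the $n-1$ closed intervals $J_k=[y_k,y_{k+1}]$. Since $f$ permutes $C$, its restriction to $C$ is a cyclic permutation (though not in general a rotation of the sorted labels). On the vertex set $\{J_1,\ldots,J_{n-1}\}$ define the Markov (covering) graph $G$ with an edge $J_i\to J_j$ exactly when $f(J_i)\supseteq J_j$; the intermediate value theorem supplies a rich set of such edges, since whenever $f(y_i)$ and $f(y_{i+1})$ straddle a pair $y_j,y_{j+1}$ the edge $J_i\to J_j$ must be present.

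The central lemma I would prove next is the \emph{itinerary lemma}: for any directed walk $J_{i_0}\to J_{i_1}\to\cdots\to J_{i_{m-1}}\to J_{i_0}$ of length $m$ in $G$, there exists a point $x\in J_{i_0}$ with $f^k(x)\in J_{i_k}$ for all $k<m$ and $f^m(x)=x$. The proof pulls subintervals back along the covering relations $f(J_{i_k})\supseteq J_{i_{k+1}}$ to obtain a nested sequence, and then applies the one-dimensional fixed-point theorem to $f^m$ on the resulting interval. To upgrade this to period \emph{exactly} $m$, I would restrict attention to admissible loops, namely loops that are not repetitions of strictly shorter loops and whose vertex sequence is not already realized by the cyclic permutation on $C$ itself; together with small-perturbation arguments this rules out the degenerate solutions in which $x$ happens to coincide with an orbit point.

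The remaining and hardest step is the combinatorial analysis of $G$. For odd $n$ the crucial structural fact (the Stefan-cycle lemma) is that $G$ must contain a canonical ``fundamental'' loop of length $n$ together with an extra chord which jointly produce a simple admissible closed walk of every length $m$ with $n\triangleright m$ in the Sharkovsky ordering. I would derive this by examining the cyclic permutation that $f$ induces on $\{y_1,\ldots,y_n\}$: since no orbit point is fixed, the signs of $f(y_i)-y_i$ take both positive and negative values, and tracking where the sign changes isolates a distinguished interval whose image covers a long block of the remaining $J_k$, which forces the Stefan structure. For $n=2^k\cdot q$ with $q$ odd and $q>1$, I would replace $f$ by $f^{2^k}$, which inherits a period-$q$ orbit from $f$; the odd case then supplies all smaller odd periods and their even multiples for $f^{2^k}$, which lift back to the corresponding periods for $f$ via a short argument on the orbit sizes of iterates. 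The pure powers-of-two tail of the ordering is handled by a doubling induction on $f^2$.

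The main obstacle is unquestionably the Stefan-cycle combinatorics for odd $n$: extracting from an arbitrary cyclic permutation on $n$ points a canonical subgraph of $G$ that simultaneously realizes \emph{every} smaller Sharkovsky period as an admissible loop requires a delicate case analysis, and the non-degeneracy arguments that upgrade ``period divides $m$'' to ``period equals $m$'' must be threaded carefully through each of these cases.
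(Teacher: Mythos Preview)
The paper does not actually prove this theorem: it is stated as a classical result with citations to Sharkovsky's original papers, and the authors explicitly import only one ingredient from its proof, the Covering Lemma (Lemma~\ref{lem:covering}), attributed to Burns--Hasselblatt. So there is no ``paper's own proof'' to compare against.

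That said, your proposal is precisely the standard modern route to Sharkovsky's theorem, and the Covering Lemma the paper quotes is exactly your ``Stefan-cycle lemma'' for odd $n$: the intervals $J_0,\ldots,J_r$ with the directed cycle plus self-loop at $J_0$ are the canonical fundamental loop and chord you describe. Your itinerary lemma and the reduction of $n=2^k q$ to the odd case via $f^{2^k}$ are also the standard steps. In short, your outline is correct and coincides with the proof strategy from which the paper extracts its Lemma~\ref{lem:covering}; the paper simply does not reproduce that proof.
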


\begin{remark} Note that the number 3 is the maximum period according to Sharkovsky's ordering, so an important corollary is that a function having a point of period 3, must also have points of any period. This special corollary is a weaker version of Sharkovsky's theorem and was proved some years later\footnote{Due to historical reasons during the late 20th century, the theory of dynamical systems saw a parallel development in the USA and the USSR, hence Sharkovsky's theorem (1964) remained unknown in the USA, until in 1975 a weaker version was rediscovered by James Yorke and
his graduate student Tien-Yien Li, in their celebrated paper called ``Period Three Implies Chaos''.} in a celebrated result by \cite{li1975period}, who coined the term ``chaos'' as used in Mathematics.
\end{remark}

We conclude the subsection with the definition of a \textit{prime period} of a function $f$.
\begin{definition}[Prime period]\label{def:prime} A function $f$ has prime period $n$ as long as it has a cycle of period $n$, but has no cycles with period greater than $n$ according to the Sharkovsky ordering.
\end{definition}

For example, in the interval [0,1], the function $f(x)=1-x$ has prime period 2, since $f(f(x))=1-(1-x)=x$ so all points are periodic with period 2, except the fixed point at 1/2.

Before formally stating our main theorems, we present an illustrative example inspired from Telgarsky's triangle wave construction and we connect it to DNNs' sensitivity to weight perturbations and their representational power.
\subsection{Sensitivity Analysis - A Motivating Example}

An important ingredient in Telgarsky's proof, was the ``triangular wave'' function (sometimes referred to as the tent map or sawtooth) depicted in \hyperref[fig:tent_map_mu_2]{Figure \ref{fig:tent_map_mu_2}} and given by:
\[
t(x; 2)={
	\begin{cases}
	2x, & \text{if\ } 0\leq x \leq \tfrac{1}{2} \\
	2(1-x), & \text{if\ } \tfrac{1}{2} < x \leq 1 \\
	0, & \text{otherwise}
	\end{cases}
}
\]

He shows that the composition of $t(x;2)$ with itself $k$ times (denoted by $t^k(x;2)$), will create exponentially (in $k$) many oscillations and as a result he is able to show a separation for the classification error when using a shallow vs a  deep neural network as a predictor.

Our starting point is the observation that the triangular wave function $t(x;2)$ contains points of period 3, e.g. $(\tfrac{2}{9}\to\tfrac{4}{9}\to\tfrac{8}{9}\to\tfrac{2}{9})$. It follows in particular, that $t(x;2)$ exhibits Li-Yorke Chaos (\cite{li1975period}) in the sense that it contains all periods. The compositions of such functions will look highly complex (see \hyperref[fig:chaos]{Figure \ref{fig:chaos}}) and in fact Telgarsky heavily relied on the highly oscillatory behavior of $t(x;2)$ to prove his depth separation result.

However, his result doesn't inform us on what would happen if one used a slightly modified version of the triangle wave $t(x;2)$. Observe that since a simple neural network with one hidden layer can represent the function $t(x;2)$, the question is basically equivalent to asking how modifying the weights on the edges of the neural network can affect its representational power (see \hyperref[fig:NN_tent_maps]{Figure \ref{fig:NN_tent_maps}}), hence the title of the current subsection. The main question is can we have a general theory that informs us on when will the function composition be hard to represent and when not? Our paper's main point is to provide an answer by checking if the function at hand has a simple property, relating to the presence of chaotic behavior.

To illustrate our point, consider the generalized triangle wave function $t(x;\mu)$ parameterized by $\mu$:
\[
t(x; \mu)={
	\begin{cases}
	\mu x, & \text{if\ } 0\leq x \leq \tfrac{1}{2} \\
	\mu(1-x), & \text{if\ } \tfrac{1}{2} < x \leq 1 \\
	0, & \text{otherwise}
	\end{cases}
}
\]
This function parameterized by $\mu$ ranges from $[0,\mu/2]$ and is closely related to the logistic map $f(x):=rx(1-x)$ used in \cite{schmitt2000lower} and exhibits a variety of limiting behaviors: for instance, it converges to a stable fixed point when $\mu \leq 1$, it exhibits chaos when $\mu=2$ etc.\footnote{For more, the interested reader can also check \url{https://en.wikipedia.org/wiki/Logistic_map}.} Instead of $\mu=2$, if we set $\mu=1$, we get the network depicted in \hyperref[fig:tent_map_mu_1]{Figure \ref{fig:tent_map_mu_1}, \ref{fig:NN_tent_mu_1}}.

Note that compositions of $t(x;1)$ (created by the same neural network architecture but with slightly different weights), behave completely differently since in the $\mu=1$ case, we will not get a highly oscillatory behavior. This can be seen in \hyperref[fig:tent_map_comps]{Figure \ref{fig:tent_map_comps}}. One difference between the two cases is the relative position of the map with the line $y=x$ and this seems to be pointing that fixed points and their generalizations i.e. periodic orbits play an important role when dealing with function compositions. Indeed, despite the wide range of possibilities one can expect by composing such functions, as we show, their behavior can be characterized using tools from dynamical systems; the exponential growth in complexity (or lack thereof) of these compositions can be explained by invoking a fundamental property of these continuous functions on bounded intervals which is the existence (or not) of periodic points of certain periods.

Similarly, we can argue about changing the parameters of the logistic map which is given by $f(x;r):=rx(1-x)$ used in \cite{schmitt2000lower} for sigmoidal networks (where $f(x;4)$ was used). The properties of the logistic map are well known and was first studied by Robert May and Mitchell Feigenbaum (\cite{may1976simple} and \cite{feigenbaum1976universality}). It is known that as one varies the parameter $r$, the logistic map gives rise to a plethora of different behaviors, hence the same is true for when one slightly perturbs the weights of a neural net used to represent the map. Please refer to \hyperref[sec:logistic_map]{Appendix \ref{sec:logistic_map}} for some figures that illuminate these differences in the logistic map.


%

\begin{figure}
	\centering
	\begin{subfigure}[b]{0.45\textwidth}
		\centering
		\begin{tikzpicture}[node distance=1.5cm and 1cm,
		>=latex,auto,
		semithick,  
		initial distance=1cm,
		every initial by arrow/.style={->}
		]
		
		\node[state,fill=red] (q0) {};
		\node[state, right of=q0,  xshift=1cm,yshift=1cm,fill=green] (q1) {\small $0$};
		\node[state, right of=q0,  xshift=1cm,yshift=-1cm,fill=green] (q2) {\small $-0.5$};
		\node[state, right of=q2,  xshift=1cm,yshift=1cm,fill=blue] (q3) {\small \color{white}{$0$}};
		\coordinate[left of=q0] (d0);
		\coordinate[right of=q3] (d3);
		
		\path[->] (q0) edge [] node {\tt $1$} (q1);
		\path[->] (q0) edge [] node {\tt $1$} (q2);
		\path[->] (q2) edge [] node {\tt $-4$} (q3);
		\path[->] (q1) edge [] node {\tt $2$} (q3);
		\path[->] (d0) edge [] node {\tt $x$} (q0);
		\path[->] (q3) edge [] node {\tt $t(x;2)$} (d3);
		
		\end{tikzpicture}
		
		\caption[]%
		{{\small Neural net with the appropriate weights and biases (the value indicated inside the hidden and output neurons).}}
		\label{fig:NN_tent_mu_2}
	\end{subfigure}
	\hfill
	\begin{subfigure}[b]{0.48\textwidth}
		\centering
		\includegraphics[width=0.8\textwidth]{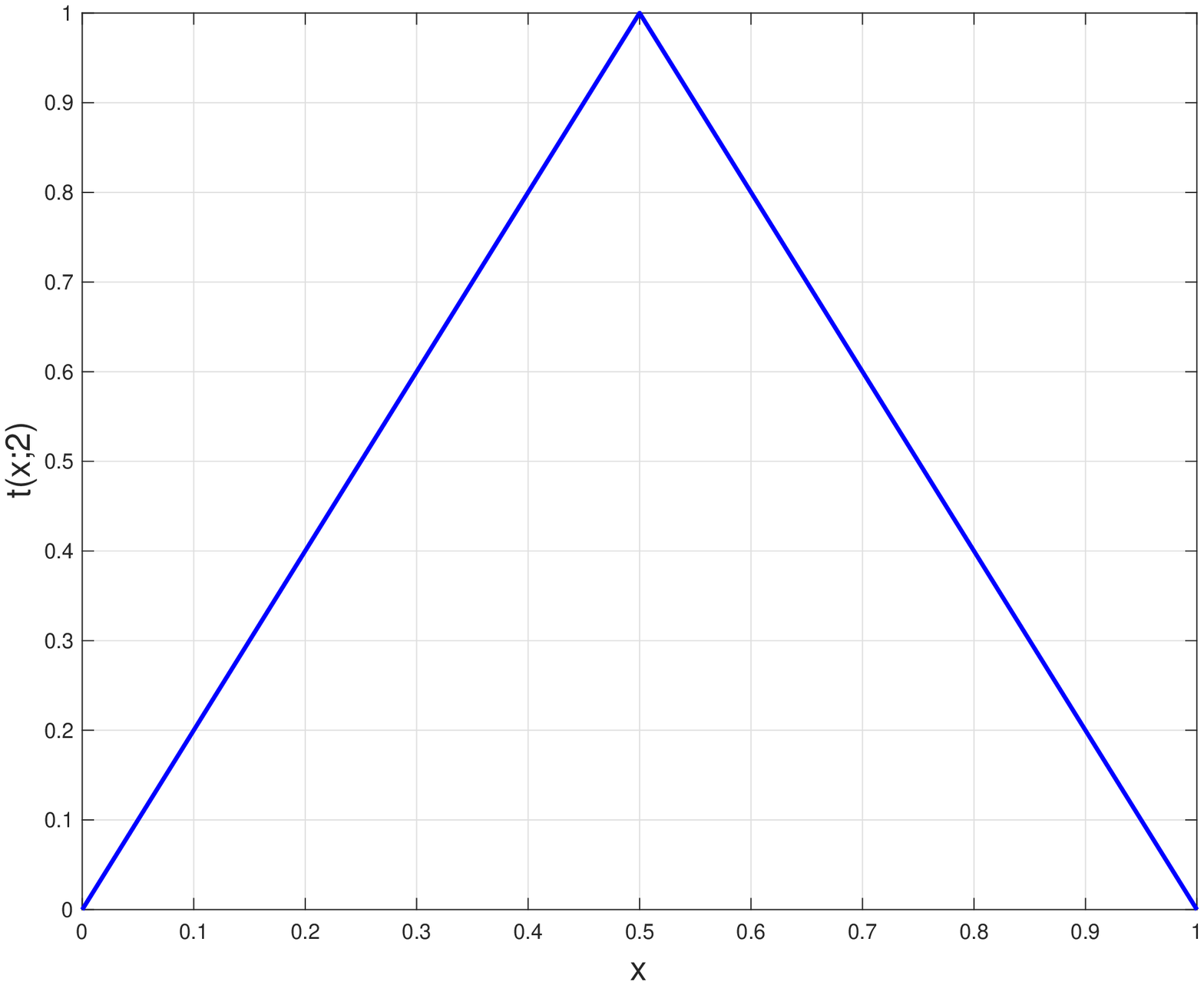}
		\caption[]%
		{{\small Tent map with $\mu=2$}}
		\label{fig:tent_map_mu_2}
	\end{subfigure}
	\vskip\baselineskip
	\begin{subfigure}[b]{0.45\textwidth}
		\centering
		\begin{tikzpicture}[node distance=1.5cm and 1cm,
		>=latex,auto,
		semithick,  
		initial distance=1cm,
		every initial by arrow/.style={->}
		]
		
		\node[state,fill=red] (q0) {};
		\node[state, right of=q0,  xshift=1cm,yshift=1cm,fill=green] (q1) {\small $0$};
		\node[state, right of=q0,  xshift=1cm,yshift=-1cm,fill=green] (q2) {\small $-0.5$};
		\node[state, right of=q2,  xshift=1cm,yshift=1cm,fill=blue] (q3) {\small \color{white}{$0$}};
		\coordinate[left of=q0] (d0);
		\coordinate[right of=q3] (d3);
		
		\path[->] (q0) edge [] node {\tt $1$} (q1);
		\path[->] (q0) edge [] node {\tt $1$} (q2);
		\path[->] (q2) edge [] node {\tt $-2$} (q3);
		\path[->] (q1) edge [] node {\tt $1$} (q3);
		\path[->] (d0) edge [] node {\tt $x$} (q0);
		\path[->] (q3) edge [] node {\tt $t(x;1)$} (d3);
		
		\end{tikzpicture}
		
		\caption[]%
		{{\small Neural net with the appropriate weights and biases (the value indicated inside the hidden and output neurons).}}
		\label{fig:tent_map_mu_1}
	\end{subfigure}
	\hfill
	\begin{subfigure}[b]{0.48\textwidth}
		\centering
		\includegraphics[width=0.8\textwidth]{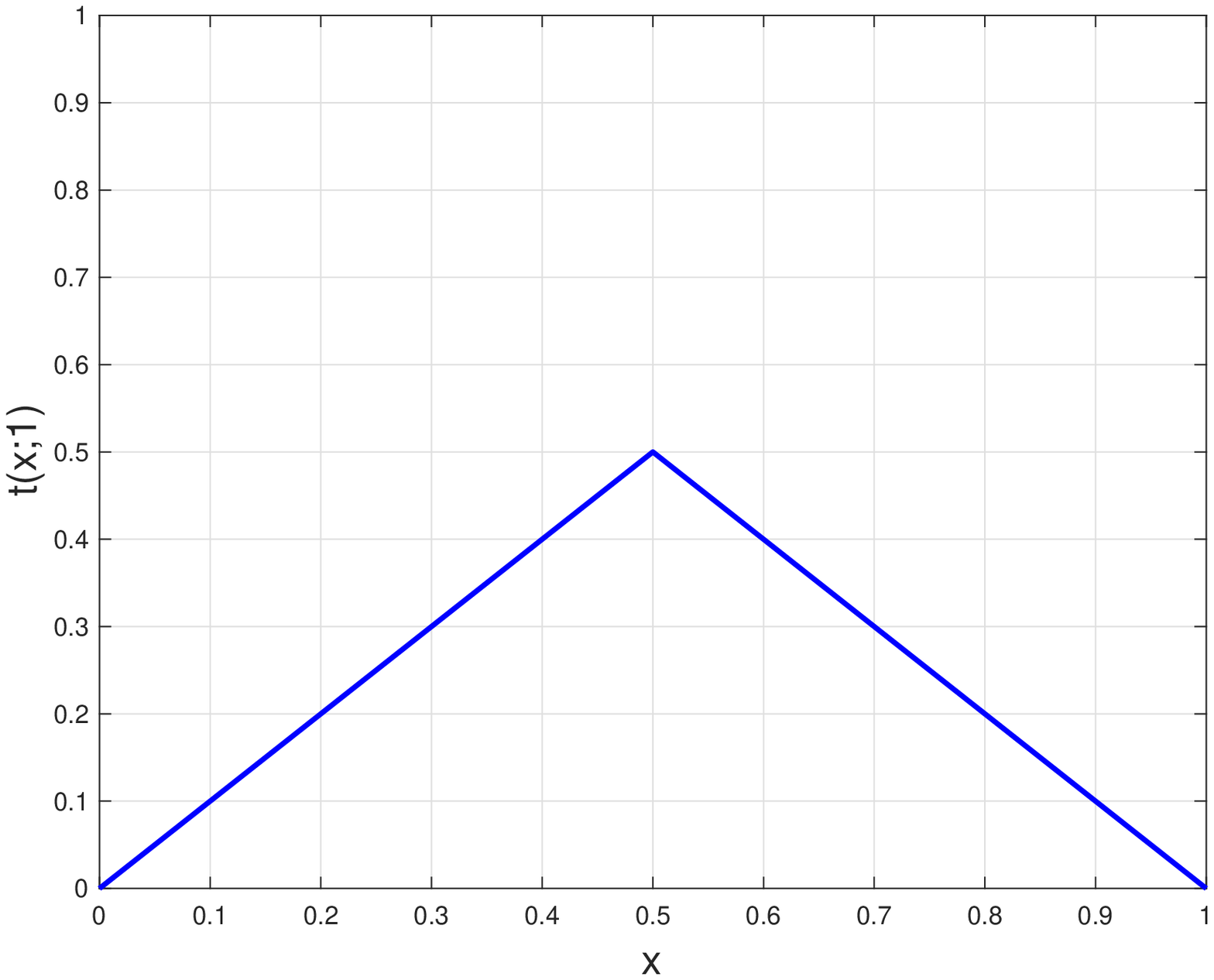}
		\caption[]%
		{{\small Tent map with $\mu=1$}}
		\label{fig:NN_tent_mu_1}
	\end{subfigure}
	\caption[The neural network instantiations that are used to create two different tent maps which vary only in the maximum value.  This is effected by a small change of weights in the output layer. All activation functions are ReLU's.]
	{\small The neural network instantiations that are used to create two different tent maps which vary only in the maximum value. This is effected by a small change of weights in the output layer. All activation functions are ReLU's.}
	\label{fig:NN_tent_maps}
\end{figure}

\begin{figure}
	\centering
	\begin{subfigure}[b]{0.44\textwidth}
		\centering
		\includegraphics[width=\textwidth]{}
		\caption[]%
		{{\small $f(x):=3.9*x(1-x)$ on the interval $[0,1]$}}
		\label{fig:f6x}
	\end{subfigure}
	\hfill
	\begin{subfigure}[b]{0.44\textwidth}
		\centering
		\includegraphics[width=\textwidth]{}
		\caption[]%
		{{\small $f^{10}(x)$ of the map on the left (zoomed in)}}
		\label{fig:f10x}
	\end{subfigure}
	\caption[]
	{\small Compositions of the logistic map $f(x)=3.9x(1-x)$ defined on the interval $[0,1]$. This map is well known to exhibit chaos and in the above figure has non-vanishing oscillations that grow with the number of compositions, albeit irregularly.}
	\label{fig:chaos}
\end{figure}

\begin{figure}
	\centering
	\begin{subfigure}[b]{0.4\textwidth}
		\centering
		\includegraphics[width=\textwidth]{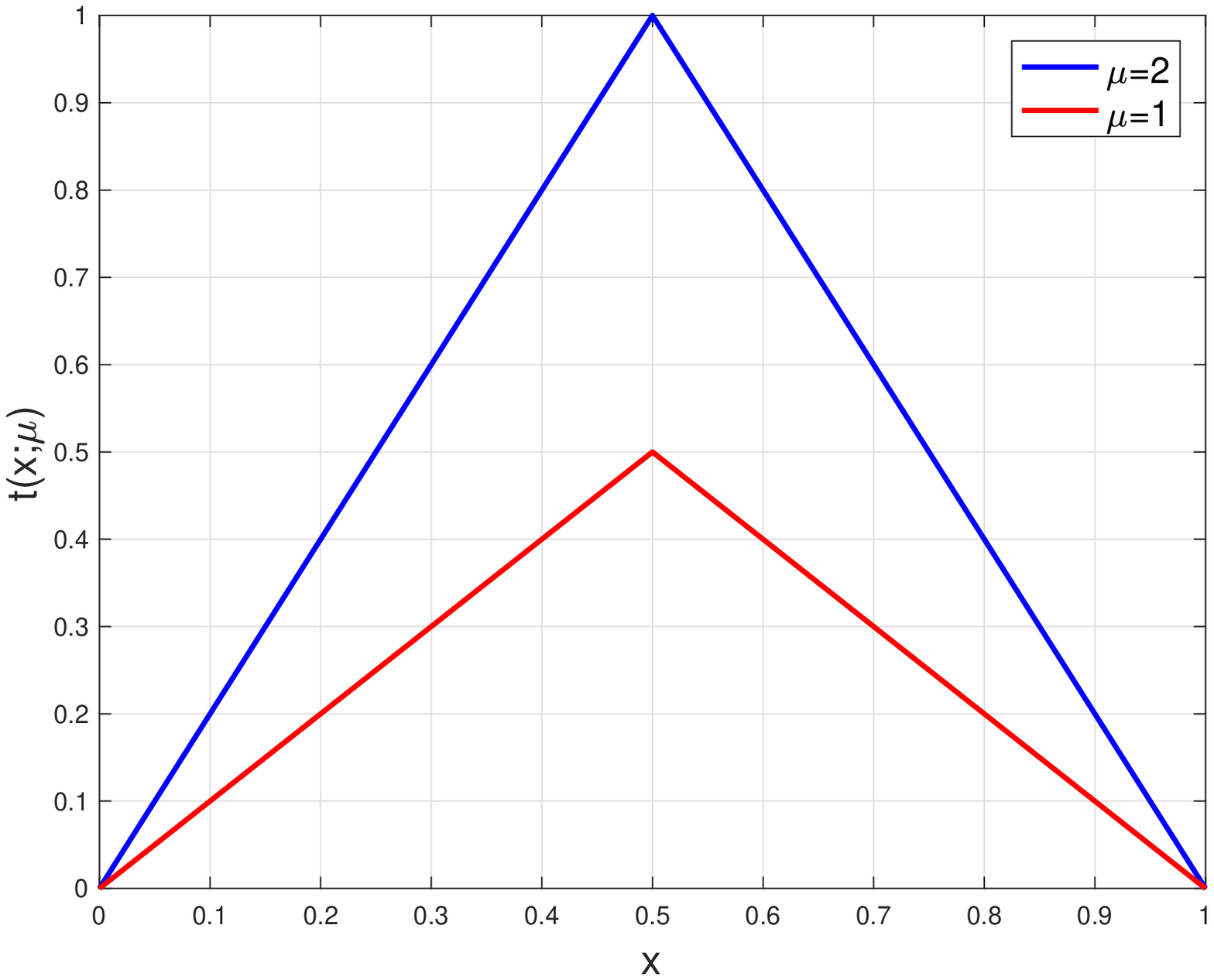}
		\caption[]%
		{{\small $t(x;\mu)$ for the tent map with $\mu=2$ (blue) and $\mu=1$ (red)}}
		\label{fig:f1x_tent_map_combined}
	\end{subfigure}
	\hfill
	\begin{subfigure}[b]{0.4\textwidth}
		\centering
		\includegraphics[width=\textwidth]{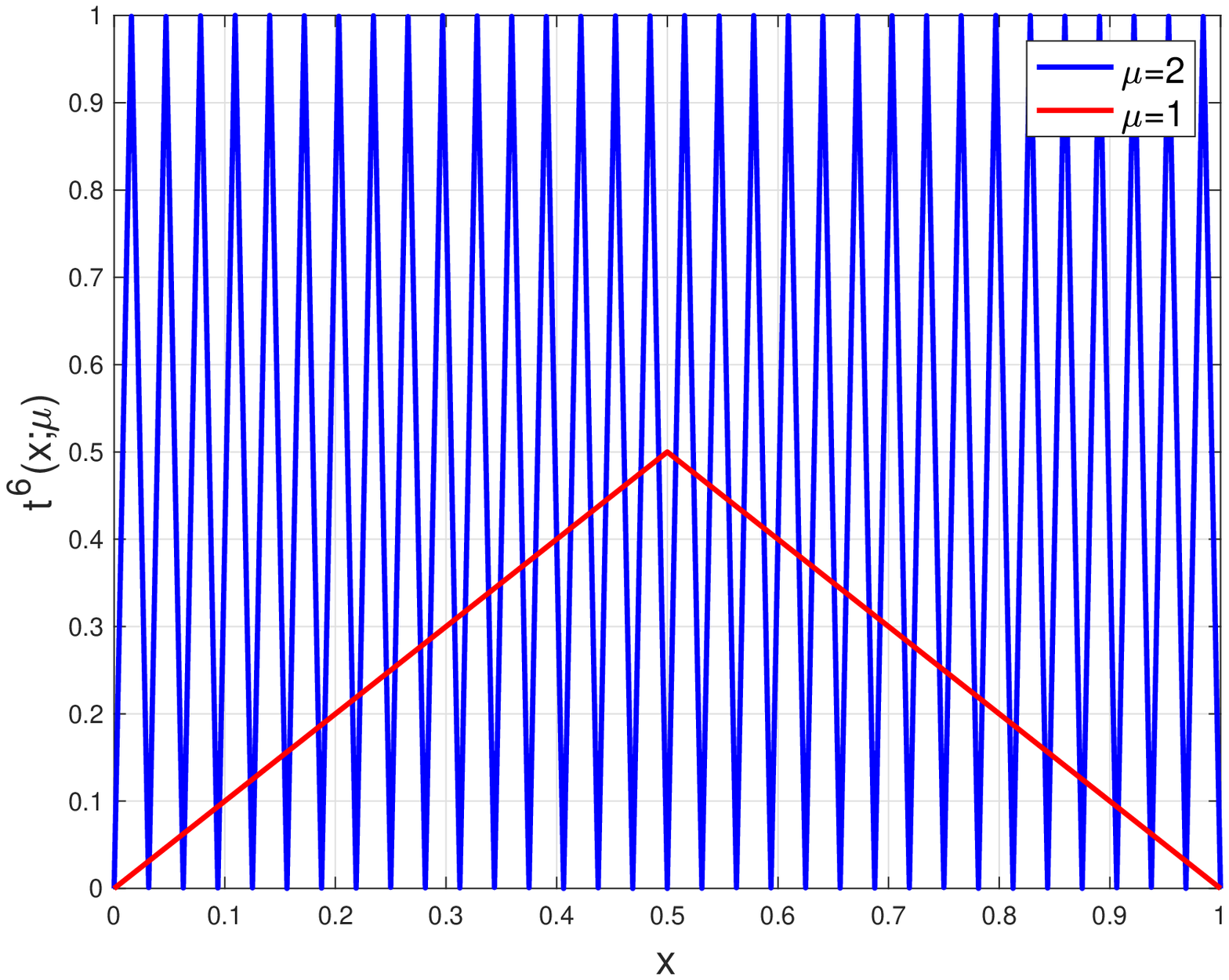}
		\caption[]%
		{{\small $t^6(x;\mu)$ for the tent map with $\mu=2$ (blue) and $\mu=1$ (red).}}
		\label{fig:f6x_tent_map_combined}
	\end{subfigure}
	\caption[Compositions of $t(x;\mu)$ with different parameters $\mu=2$ and $\mu=1$ are shown. The compositions create (exponential) non-vanishing oscillations when $\mu=2$, however the compositions remain unchanged when $\mu=1$.]
	{\small Compositions of $t(x;\mu)$ with different parameters $\mu=2$ and $\mu=1$ are shown. The compositions create (exponential) non-vanishing oscillations when $\mu=2$, however the compositions remain unchanged when $\mu=1$.}
	\label{fig:tent_map_comps}
\end{figure}


\subsection{Informal Statements of Main Theorems}
We demonstrate that a simple property of $f$ governs the depth-width trade-offs in order to represent it and we give quantitative bounds for them.
This simple property has to do with the periods that the function $f$ contains. Informally, our first main theorem states that if a function $f$ contains periodic points with certain periods, then composing $f$ with itself many times, will result in exponentially many oscillations, giving rise to complicated behaviors and chaos:

\begin{theorem} Let $f: [0,1] \to [0,1]$ be a continuous function. Assume that there exists a cycle of period $n$, where $n = m\cdot p$ with $p$ being an odd number greater than one and with $m$ being a power of two (it might be $m=1$). Then, there exist $x,y \in[0,1]$ such that the function $f^{mt}$ (taking $mt$ compositions of $f$ with itself) ``oscillates'' (also look \hyperref[def:crossings]{Definition~\ref{def:crossings}}) at least $\rho^t$ times between $x$ and $y$ for all $t\in \mathbb{N}^*$, where $\rho$ is the positive root greater than one of the polynomial equation $\lambda^{p-1}-\lambda^{p-2}-1=0$.
\end{theorem}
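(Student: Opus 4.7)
My plan is to first reduce to the case $m=1$ and then analyze the Markov digraph induced by the cycle. Since $m$ is a power of two and $p$ is odd, $\gcd(m,p)=1$; a quick check shows the iterates $x_0, f^m(x_0),\ldots, f^{(p-1)m}(x_0)$ are pairwise distinct (if $f^{im}(x_0)=f^{jm}(x_0)$ with $0\le i<j\le p-1$, then $mp\mid m(j-i)$, forcing $p\mid(j-i)$, a contradiction). Hence $g:=f^m$ has a cycle of period exactly $p$, and since $f^{mt}=g^t$, it suffices to prove: any continuous $g:[0,1]\to[0,1]$ with a cycle of odd period $p>1$ is such that $g^t$ oscillates at least $\rho^t$ times between some $x,y\in[0,1]$.

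For the remaining case, order the cycle points as $z_1<z_2<\cdots<z_p$ and form the $p-1$ atoms $I_j:=[z_j,z_{j+1}]$. Build the Markov digraph $\mathcal{G}$ on nodes $\{I_1,\ldots,I_{p-1}\}$ with edges $I_i\to I_j$ whenever $g(I_i)\supseteq I_j$, a relation that by the intermediate value theorem depends only on where $z_i$ and $z_{i+1}$ are mapped. The structural claim I would invoke is the \v{S}tefan theorem for odd cycles: $\mathcal{G}$ must contain a specific subgraph $\mathcal{S}_p$ consisting of a directed $(p-1)$-cycle together with one extra chord, whose adjacency matrix is nonnegative, irreducible, and has characteristic polynomial exactly $\lambda^{p-1}-\lambda^{p-2}-1$. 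This is where oddness of $p$ is crucial: the permutation of $\{z_1,\ldots,z_p\}$ induced by $g$ cannot be realized as a single rotation-like shift, so a case analysis forces at least one atom to cover two others, which is precisely what spawns the extra chord and produces the advertised characteristic polynomial.

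With $\mathcal{S}_p$ in hand, everything else is linear-algebraic. Perron--Frobenius (or elementary inspection, using $q(1)=-1<0$ and $q(\lambda)\to\infty$ as $\lambda\to\infty$) gives a dominant real eigenvalue $\rho>1$; the number of walks of length $t$ in $\mathcal{S}_p$ starting at a fixed atom grows at least like $c\rho^t$, and a routine inductive tightening absorbs the multiplicative constant. Finally, I pull back each walk $I_{j_0}\to I_{j_1}\to\cdots\to I_{j_t}$ through the covering relations to a closed subinterval $J\subset I_{j_0}$ on which $g^t$ is monotone and surjects onto $I_{j_t}$; these pull-backs have pairwise disjoint interiors and each traverses the terminal atom exactly once, so taking $x,y$ to be the endpoints of this common terminal atom yields at least $\rho^t$ oscillations of $g^t=f^{mt}$ between $x$ and $y$, as required. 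The step I expect to be the main obstacle is the \v{S}tefan structural claim — embedding $\mathcal{S}_p$ inside $\mathcal{G}$ — which is the combinatorial kernel of the Sharkovsky--\v{S}tefan program and demands careful bookkeeping about how the cycle permutation interacts with the linear order on $[0,1]$; by comparison, the reduction to $m=1$, the spectral bound on $\rho$, and the translation from walks to oscillations are conceptually routine once this skeleton is established.
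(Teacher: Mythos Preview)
Your outline matches the paper's proof: reduce to $g=f^m$ with odd period $p$, extract the \v{S}tefan/Covering-Lemma subgraph (a directed cycle on at most $p-1$ atoms with a \emph{self-loop} at $J_0$, not a chord), read off the characteristic polynomial $\lambda^{p-1}-\lambda^{p-2}-1$, and lower-bound crossings of $J_0$ by the spectral radius.

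Two small points to tighten. First, walks of length $t$ in $\mathcal{S}_p$ do not all share a terminal atom, so to get a \emph{single} pair $x,y$ you must fix the terminal atom (the paper takes $J_0$) and count walks ending there; the paper obtains the constant-free bound $\rho^t$ directly via $\|A^t\|_\infty\ge\operatorname{sp}(A^t)=\rho^t$ together with the observation that row $0$ of $A^t$ has the largest sum, which sidesteps your ``inductive tightening'' of the Perron constant. Second, the pull-back subinterval $J$ need not carry a \emph{monotone} restriction of $g^t$ for general continuous $g$; you only need $g^t(J)\supseteq J_0$, which follows from iterated intermediate-value, and disjointness of the pull-backs comes from the fact that distinct walks diverge at some step into atoms with disjoint interiors.
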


Our second main theorem then draws the connection between the number of oscillations a function has and the depth-width trade-offs needed:

\begin{theorem} Let $k$ be a positive integer and $f$ be a function as above. We set $\rho$ to be the positive root greater than one of the polynomial equation $\lambda^{p-1} - \lambda^{p-2} - 1=0$. We can construct a sequence of points $(x_i,y_i)_{i=1}^{2n}$ with $n := \frac{\lfloor \rho^{k} \rfloor}{2}$ such that the classification error of the function $f^{mk}$ is zero, whereas the classification error of any neural network with $l$ layers and $u$ nodes per layer, where $u \leq \frac{\rho^{\frac{k}{l}}}{8}$, necessarily has classification error $\geq \frac{1}{4}.$
\end{theorem}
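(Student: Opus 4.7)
The plan is to combine the oscillation lower bound from the previous theorem with a standard affine-piece counting argument for ReLU networks, in the spirit of Telgarsky's depth-separation proof.

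To build the classification instance, I would apply the previous theorem with $t = k$ to obtain two points $x^*, y^* \in [0,1]$ (say $x^* < y^*$) between which $f^{mk}$ oscillates at least $\rho^k$ times. Unpacking the underlying crossings definition, this yields $2n = \lfloor \rho^k \rfloor$ disjoint subintervals $I_1 < I_2 < \cdots < I_{2n}$ of $[0,1]$ on which $f^{mk}$ alternately attains the values $x^*$ and $y^*$. From each $I_i$ I would pick a representative $x_i$ with $f^{mk}(x_i) \in \{x^*, y^*\}$ and define $y_i = 0$ if $f^{mk}(x_i) = x^*$ and $y_i = 1$ otherwise. Composing $f^{mk}$ with a threshold at the midpoint $(x^* + y^*)/2$ correctly classifies every sample, so $f^{mk}$ achieves zero error on the instance $(x_i, y_i)_{i=1}^{2n}$.

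For the lower bound against shallow networks, I would invoke the standard bound that any ReLU network with $l$ layers and $u$ nodes per layer computes a piecewise affine function with at most $(2u)^l$ affine pieces. Under the hypothesis $u \leq \rho^{k/l}/8$, this gives at most $(\rho^{k/l}/4)^l = \rho^k/4^l \leq \rho^k/4$ pieces. Thresholding the network's output at $1/2$ produces a $\{0,1\}$-valued classifier that changes value at most once per affine piece, i.e., at most $\rho^k/4$ times along $[0,1]$. Hence the classifier is constant on at most $\rho^k/4 + 1$ maximal subintervals, and within any such block of $r$ alternating-label samples it correctly labels exactly $\lceil r/2 \rceil$. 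Summing over blocks and using $\sum_j r_j = 2n$ yields at most roughly $n + \rho^k/8 + O(1)$ correct predictions out of $2n \approx \rho^k$ samples, giving a misclassification fraction of at least $1/4$ (the factor of $8$ in the hypothesis on $u$ is calibrated precisely to absorb the relevant constants and the $1$ lost in rounding).

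The main obstacle is the conversion from piece counts to classification error: it is conceptually a pigeonhole argument, but getting the constants right --- accounting for the ceiling in $\lceil r/2 \rceil$, the extra $+1$ block, and the floor in $2n = \lfloor \rho^k \rfloor$ --- requires some care to land at exactly $1/4$. A secondary, smaller obstacle is the precise translation of ``oscillates $\rho^t$ times'' from the previous theorem into the $2n$ disjoint subintervals with alternating attained values; this should follow directly from the crossings definition but needs to be verified. Everything else reduces to a direct invocation of the previous theorem and a citation of the standard ReLU piece-count bound.
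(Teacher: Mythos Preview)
Your proposal is correct and follows essentially the same route as the paper: apply the oscillation theorem to extract $2n \approx \rho^k$ alternating-valued points, classify them perfectly with $f^{mk}$ thresholded at the midpoint $(x^*+y^*)/2$, and then combine Telgarsky's $(2u)^l$ piece-count bound with a pigeonhole argument to lower-bound the error of any shallow network. The only cosmetic differences are that the paper cites Telgarsky's Lemma~2.2 directly for the piece-to-error conversion (whereas you sketch that counting argument yourself) and that the paper thresholds $g$ at $(x^*+y^*)/2$ rather than $1/2$; neither changes the substance.
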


Formal statements for the two theorems can be found in \hyperref[sec:theorem1]{Section~\ref{sec:theorem1}} and \hyperref[sec:theorem2]{Section~\ref{sec:theorem2}}.

Using these theorems, we draw connections with previous results \cite{Telgarsky16}, \cite{schmitt2000lower} in a unified way, thus identifying chaotic behavior as the main underlying thread for depth-width trade-offs.
Technically, our approach is based on an eigenvalue analysis of certain matrices associated with such periodic functions.

\subsection{Other Related Work}
Understanding the benefits of depths on the expressive power a specific computational model can have, is an important area of research spanning different computational models and results come in the flavor of depth separation arguments. Roughly speaking, many of the results in this area rely on a suitably defined notion of ``complexity'' of a function we would like to represent, and then proceed by proving that under this notion, deep models have significantly more power than shallower models.
For example, if the computational model of interest is the family of boolean or threshold circuits, depth lower bounds are given in \cite{hastad1986almost,rossman2015average,haastad1987computational,parberry1994circuit,kane2016super}. Furthermore, people have analyzed sum-product networks (summation and product nodes) and studied trade-offs for depth (\cite{delalleau2011shallow,martens2014expressive}).


Coming closer to neural networks computation where the activation units can be general real-valued functions, important previous results include
\cite{eldan2016COLT,telgarsky15,Telgarsky16,schmitt2000lower,montufar2014number,malach2019deeper,poole2016NIPS,raghu2017ICML,arora2016understanding,liang2016deep,kileel2019expressive}.
Regarding the aforementioned notions of ``complexity'' used in depth separation arguments, examples include the notion of global curvature (\cite{poole2016NIPS}), trajectory length (\cite{raghu2017ICML}), number of oscillations (\cite{telgarsky15,Telgarsky16} and \cite{schmitt2000lower}), number of linear regions (\cite{montufar2014number}), fractals (\cite{malach2019deeper}) and more.
Our work is more closely related to \cite{telgarsky15,Telgarsky16}, and \cite{schmitt2000lower} since it is easy to see that their maps are chaotic, but we conjecture that many of the notions of complexity introduced in this line of research to showcase benefits of depth actually arise due to chaotic behavior. In this sense, we conjecture that chaotic behavior is the main culprit for the failure of neural networks to represent certain functions, unless they are sufficiently deep (or have exponential width). Moreover, other works that have exploited the powerful result by Li-Yorke (in online learning frameworks) are \cite{PPP17, thip}.

\section{Further Background: The Covering Lemma}




The crux of the proof of Sharkovsky's theorem provided by \cite{burns2011sharkovsky} contains a covering lemma that will be our starting point to prove our main results. Before we proceed with the statement of the Covering Lemma, we provide one more important definition.

\begin{definition}[Covering relation]
Let $f$ be a function and $I_1 , I_2$ be two closed intervals. We say that $I_1$ covers $I_2$ under $f$, denoted by $I_1 \xxrightarrow{f} I_2$ as long as $I_2 \subseteq f(I_1).$
\end{definition}

For example, the triangle wave $t(x;2)$ that has the period 3 point $\tfrac29$ (recall $\tfrac{2}{9}\to\tfrac{4}{9}\to\tfrac{8}{9}\to\tfrac{2}{9}$) naturally defines two intervals $I_1=[\tfrac29,\tfrac49]$ and $I_2=[\tfrac49,\tfrac89]$ with the covering relations: $I_1 \xxrightarrow{f} I_2$, $I_2 \xxrightarrow{f} I_2$ and $I_2 \xxrightarrow{f} I_1$.

\begin{lemma}[Covering Lemma for odd periods]\label{lem:covering}
Let $f: [0,1] \to [0,1]$ be a continuous function and assume $f$ has a cycle $C$ of period $n$, where $n>1$ is an odd number. Denote $\beta_0, ..., \beta_{n-1} \in C$ the elements of the cycle in increasing order and define the sequence of closed intervals $I_0,..., I_{n-2}$ where $I_{i} = [\beta_i,\beta_{i+1}]$ (they have pairwise disjoint interiors). Then, there exists a sub-collection of the aforementioned intervals (not necessarily in the same ordering) $J_0,...J_r$ with $1 \leq r \leq n-2$ such that the following covering relation holds:
\begin{enumerate}
\item $J_{i} \xxrightarrow{f} J_{i+1},$ \ for $1 \leq i \leq r-1$,
\item $J_r \xxrightarrow{f} J_0$ and $J_0 \xxrightarrow{f} J_0 \cup J_1$.
\end{enumerate}
\end{lemma}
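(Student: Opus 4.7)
The plan is to identify a distinguished interval $J_0$ that covers itself, and then to exhibit a simple cycle through $J_0$ in the covering graph on $\{I_0, \ldots, I_{n-2}\}$; such a cycle has length at most $n-1$ automatically (only $n-1$ vertices), and together with the self-loop it will yield the required structure. To find $J_0$, label each cycle point by the sign $\epsilon_i := \mathrm{sgn}(f(\beta_i) - \beta_i) \in \{+,-\}$, well-defined since no cycle point is fixed (as $n > 1$). Because $\beta_0$ is the minimum and $\beta_{n-1}$ the maximum of $C$, we have $\epsilon_0 = +$ and $\epsilon_{n-1} = -$, so there is a smallest index $j$ with $\epsilon_j = +, \epsilon_{j+1} = -$. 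Then $f(\beta_j) \geq \beta_{j+1}$ and $f(\beta_{j+1}) \leq \beta_j$ (since images lie in $C$), and applying the intermediate value theorem to $f$ on $[\beta_j, \beta_{j+1}]$ yields $f(J_0) \supseteq J_0$ for $J_0 := I_j$.

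Next I would upgrade to show $J_0$ covers a neighbor. If $f(J_0) = J_0$ exactly, then $\{\beta_j, \beta_{j+1}\}$ would be a $2$-cycle of $f$, contradicting the hypothesis that $\beta_j$ lies in an odd cycle of period $n \geq 3$. So one of $f(\beta_j) > \beta_{j+1}$ or $f(\beta_{j+1}) < \beta_j$ holds; since images lie in $C$, this strengthens to $f(\beta_j) \geq \beta_{j+2}$ or $f(\beta_{j+1}) \leq \beta_{j-1}$, and IVT then places the corresponding neighbor $I_{j+1}$ or $I_{j-1}$ inside $f(J_0)$. I take this neighbor as $J_1$, so together with the self-loop this yields $J_0 \xxrightarrow{f} J_0 \cup J_1$.

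For the return path I plan to show the covering graph admits a simple cycle through $J_0$ of length $\geq 2$, via two ingredients: (a) total forward reachability from $J_0$, and (b) the existence of an incoming edge to $J_0$ from some $I_{i^*} \neq J_0$. For (a), let $T$ be the forward-reachable set from $J_0$; $T$ is closed under the covering relation, so $U := \bigcup_{I_a \in T} I_a$ contains the orbit of $\beta_j$ under $\pi := f|_C$, hence all of $C$. Writing $U$ as a disjoint union of its connected components $R_1, \ldots, R_s$, the basic image $[\beta_{\min(\pi(a),\pi(a+1))}, \beta_{\max(\pi(a),\pi(a+1))}]$ of any $I_a \subseteq R_k$ is connected and entirely covered by $T$, hence lies in a single component $R_{\tau(k)}$; the single-cycle structure of $\pi$ on $C$ then forces $\tau$ to be a cyclic permutation of $\{R_1, \ldots, R_s\}$ of length $s$. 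But the self-loop at $J_0 \subseteq R_k$ forces $J_0 \subseteq R_{\tau(k)}$, so $\tau(k) = k$, which is compatible with $\tau$ being an $s$-cycle only if $s = 1$. Hence $U$ is connected, $U = [\beta_0, \beta_{n-1}]$, and $T$ is everything. For (b), an edge $I_i \xxrightarrow{f} J_0$ corresponds to $\pi(i)$ and $\pi(i+1)$ straddling the gap $(j, j+1)$; equivalently, to $(i, i+1)$ being a boundary in the partition $A := \{i : \pi(i) \leq j\}$, $B := \{i : \pi(i) \geq j+1\}$ of $\{0, \ldots, n-1\}$ (of sizes $j+1$ and $n-j-1$). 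A single boundary would force $A$ and $B$ to be contiguous with boundary at $(j, j+1)$, giving $n = 2j+2$ even; since $n$ is odd, there must be at least two boundaries, producing an incoming edge to $J_0$ from some $I_{i^*} \neq J_0$. Combined with $I_{i^*} \in T$, this places $J_0$ in a strongly connected component of size $\geq 2$, and extracting a shortest simple cycle through $J_0$ yields $J_0, J_1, \ldots, J_r$ with $r \leq n-2$.

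The main technical hurdle is (a), the connectedness of the invariant union $U$. The subtle point is that $f(I_a)$ may strictly exceed its basic image due to interior extrema of $f$, so naive $f$-invariance of $U$ fails; however, restricting attention to the basic image (whose endpoints always lie in $C$) is enough to drive the component-permutation argument, and the combination of the self-loop at $J_0$ with the odd parity of $n$ then rigidly forces the number of components to be one.
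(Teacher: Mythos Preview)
The paper does not prove this lemma; it is stated as background imported from the Burns--Hasselblatt account of Sharkovsky's theorem (see the first sentence of Section~2), so there is no in-paper argument to compare against.

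Your proposal is essentially correct and follows the standard route. Three small points would sharpen it. First, the paragraph producing a specific neighbor $J_1$ is superfluous: once you extract a simple cycle $J_0 \to J_1 \to \cdots \to J_r \to J_0$ at the end, the relation $J_0 \xxrightarrow{f} J_0 \cup J_1$ is immediate from the self-loop together with the first edge of that cycle, so you need not fix $J_1$ in advance (and the sentence ``if $f(J_0)=J_0$ exactly then $\{\beta_j,\beta_{j+1}\}$ is a $2$-cycle'' is not literally correct anyway, since interior extrema could enlarge $f(J_0)$ beyond the basic image). Second, in (a) you should justify two assertions with a sentence each: that $U$ contains the whole $\pi$-orbit of $\beta_j$ (the set of indices $a$ for which $\beta_a$ is an endpoint of some interval in $T$ is $\pi$-invariant, because the basic image of any $I_b\in T$ puts both $\beta_{\pi(b)}$ and $\beta_{\pi(b+1)}$ into $U$), and that $\tau$ is well-defined on components (adjacent $I_a,I_{a+1}$ in the same component have basic images sharing the point $\beta_{\pi(a+1)}$, hence land in the same component). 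Third, before extracting the cycle you must discard the self-loop edge at $J_0$, since otherwise the shortest cycle through $J_0$ has length~$1$; reachability in (a) is unaffected by this deletion (shortest paths never use self-loops), and (b) still furnishes a non-loop incoming edge, so a simple cycle of length $\ge 2$ through $J_0$ exists and has length at most $n-1$, giving $1\le r\le n-2$ as required.
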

For a pictorial illustration of the Covering Lemma, see \hyperref[fig:covering]{Figure \ref{fig:covering}}. In particular, observe that for $n=3$ we get $r=1$ so the covering relation is as in \hyperref[fig:covering]{Figure \ref{fig:covering}}. We conclude this section with the formal definition of \textit{crossings} (or oscillations) and we refer the reader to \hyperref[fig:period3]{Figure \ref{fig:period3}} for some examples.




\begin{figure}
	\centering
	\begin{subfigure}[b]{0.4\textwidth}
		\centering
		\begin{tikzpicture}[node distance=1.5cm and 1cm,
		>=latex,auto,
		semithick,  
		initial distance=1cm,
		every initial by arrow/.style={->}
		]
		
		\node[state] (q0) {$J_0$};
		\node[state, right of=q0,  xshift=1cm] (q1) {$J_1$};
		
		\path[->] (q0) edge [bend left, above] node {\tt f} (q1);
		\path[->] (q1) edge [bend left, below] node {\tt f} (q0);
		
		\draw (q0) edge [loop left] node {\tt f} (q0);
		\end{tikzpicture}
		
		\caption[]%
		{{\small Covering Lemma \ref{lem:covering} relations for cycle of period three ($r=1$).}}
		\label{fig:cov_3}
	\end{subfigure}
	\hfill
	\begin{subfigure}[b]{0.4\textwidth}
		\centering
		\begin{tikzpicture}[node distance=1.5cm and 1cm,
		>=latex,auto,
		semithick,  
		initial distance=1cm,
		every initial by arrow/.style={->}
		]
		
		\node[state] (q0) {$J_0$};
		\node[state, right of=q0,  xshift=1cm,yshift=-1cm] (q1) {$J_1$};
		\node[state, left of=q1,  xshift=-1cm,yshift=-1cm] (q2) {$J_2$};
		\node[state, left of=q2,  xshift=-1cm,yshift=1cm] (q3) {$J_r$};
		
		\path[->] (q0) edge [bend left, above] node {\tt f} (q1);
		\path[->] (q1) edge [bend left, above] node {\tt f} (q2);
		\path[->] (q2) edge [bend left, above,dashed] node {\tt f} (q3);
		\path[->] (q3) edge [bend left, above] node {\tt f} (q0);
		
		\draw (q0) edge [loop above] node {\tt f} (q0);
		\end{tikzpicture}
		\caption[]%
		{{\small Covering Lemma \ref{lem:covering} relations for cycle of odd period at least three.}}
		\label{fig:cov_gen}
	\end{subfigure}
	\caption[]
	{\small The covering relations of intervals $J_0,...,J_r$ from Lemma \ref{lem:covering}. Observe that the graph is a directed cycle with a self loop at interval $J_0$. Note that there might be more relations (``edges'').}
	\label{fig:covering}
\end{figure}

\begin{definition}[Crossings]\label{def:crossings} We say that a continuous function $f: [0,1]\to [0,1]$ crosses the interval $[x,y]$ with $x,y \in [0,1]$ if there exist $a,b$ such that $f(a)=x$ and $f(b)=y$. Moreover we denote $\textrm{C}_{x,y}(f)$ the number of times $f$ crosses $[x,y]$. That is $\textrm{C}_{x,y}(f)=t$ if there exist numbers $a_1,b_1<a_2,b_2<\dots<a_t,b_t$ in $[0,1]$ so that $f(a_i) = x$ and $f(b_i)=y$ for all $1 \leq i \leq t$. Observe that if $\mathcal{I}_{f,x,y}$ is used to denote\footnote{In Telgrasky's paper, $\mathcal{I}_{f}$ is used to denote the number of intervals where $\textbf{1}[f(z) \geq \frac{1}{2}]$ is piecewise constant and partitions $[0,1]$.} the number of intervals the function $\tilde{f}_{x,y}(z):=\textbf{1}[f(z) \geq \frac{x+y}{2}]$ is piecewise constant and partitions $[0,1]$, then $\textrm{C}_{x,y}(f) \leq \mathcal{I}_{f,x,y}.$
\end{definition}

\section{Periods Determine the Number of Crossings}\label{sec:theorem1}
\subsection{Period that is not a power of two implies exponential crossings}
In this section, we prove our main theorem, the statement of which is given below. Technically, we make use of \hyperref[lem:covering]{Lemma \ref{lem:covering}} (Covering Lemma) to show the exponential growth of the number of crossings.

\begin{theorem}\label{thm:main} Let $f: [0,1] \to [0,1]$ be a continuous function. Assume that there exists a cycle of period $n$ where $n = m\cdot p$, $p$ is an odd number greater than one and $m$ being a power of two (it might be $m=1$). It holds that there exist $x,y \in[0,1]$ so that $\textrm{C}_{x,y}(f^{mt})$ is $c^t$ for all $t\in \mathbb{N}^*$, where $c$ is the positive root greater than one of the polynomial equation $\lambda^{p-1}-\lambda^{p-2}-1=0$.
\end{theorem}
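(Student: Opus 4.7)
The plan is to first reduce to the case $m=1$ by passing to the iterate $g:=f^m$. Since $f$ has a cycle of period $n=mp$ and since $\gcd(m,p)=1$ (because $p$ is odd and $m$ is a power of two), that orbit constitutes a cycle of odd period $p>1$ for $g$: indeed $(f^m)^k(x_0)=x_0$ iff $mp \mid mk$ iff $p \mid k$. Because $g^t=f^{mt}$, any crossings lower bound of the form $\textrm{C}_{x,y}(g^t)\geq c^t$ immediately gives the desired bound for $f$, so from here on I may assume $n=p$ is odd and $m=1$.

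With $g$ in hand, I would apply the Covering Lemma (\hyperref[lem:covering]{Lemma~\ref{lem:covering}}) to the cycle of odd period $p$, obtaining closed intervals $J_0,\ldots,J_r$ (with $r=p-2$, giving $p-1$ intervals in total) whose covering graph is the directed cycle $J_0\to J_1\to\cdots\to J_r\to J_0$ together with a self-loop at $J_0$. Let $A$ be the $(p-1)\times (p-1)$ adjacency matrix of this graph. The bridge to crossings is the following standard itinerary construction: given any closed walk $J_0\to J_{i_1}\to\cdots\to J_{i_{t-1}}\to J_0$ of length $t$, a nested inverse-image argument that repeatedly invokes the covering relations and the Intermediate Value Theorem produces a closed subinterval $U\subseteq J_0$ satisfying $g^s(U)\subseteq J_{i_s}$ for every $1\leq s\leq t-1$ and $g^t(U)=J_0$. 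Moreover, subintervals arising from distinct walks can be arranged to have pairwise disjoint interiors (this is essentially a discrete horseshoe). Taking $x,y$ to be the two endpoints of $J_0$, each such subinterval contributes one crossing of $g^t$ across $[x,y]$, so $\textrm{C}_{x,y}(g^t)$ is at least $a_t$, the number of closed walks from $J_0$ to $J_0$ of length $t$.

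It then remains to estimate $a_t$. Splitting any closed walk at its first edge out of $J_0$ -- either the self-loop, after which one has a walk of length $t-1$, or the full trip around the cycle, after which one has a walk of length $t-(p-1)$ -- yields the linear recurrence $a_t=a_{t-1}+a_{t-(p-1)}$, whose characteristic polynomial is exactly $\lambda^{p-1}-\lambda^{p-2}-1$. This polynomial evaluates to $-1$ at $\lambda=1$ and tends to $+\infty$ as $\lambda\to\infty$, so it has a real root $c>1$; and since $A$ is nonnegative with a strongly connected and (thanks to the self-loop at $J_0$) aperiodic underlying graph, Perron--Frobenius identifies $c$ as the spectral radius of $A$ and forces $a_t$ to grow on the order of $c^t$. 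The main technical obstacle, I expect, is the passage from combinatorial walks to analytic crossings: establishing that the $a_t$ walks really yield $a_t$ pairwise disjoint subintervals of $J_0$, each mapped onto $J_0$ by $g^t$, requires a careful inductive nested-interval construction (in the spirit of the Burns--Hasselblatt proof of Sharkovsky's theorem) together with a verification that distinct itineraries never collapse into overlapping subintervals. Once this correspondence is pinned down, the reduction, the adjacency-matrix bookkeeping and the eigenvalue estimate are all routine.
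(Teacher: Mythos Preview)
Your proposal is correct and follows essentially the paper's strategy: pass to $g=f^m$ to reduce to an odd period, apply the Covering Lemma, encode the covering relations in an adjacency matrix, and extract exponential growth from its spectral radius. Two minor points of divergence are worth flagging. First, the Covering Lemma only guarantees $1\le r\le p-2$, not $r=p-2$ as you assert; the paper works with general $r$, and the stated bound with $c=\rho_{p-2}$ then follows because $\rho_r$ is decreasing in $r$ (\hyperref[rem:decreasing]{Remark~\ref{rem:decreasing}}). Second, where you count closed walks $(A^t)_{00}$ via an explicit itinerary construction and appeal to Perron--Frobenius, the paper instead tracks the full crossing vector $\delta^t$, establishes the recursion $\delta^{t+1}\ge A\,\delta^t$ directly from the covering relations, and bounds $\delta^t_0\ge \|A^t\|_\infty\ge \rho_r^t$ using the elementary fact that the spectral radius is dominated by any matrix norm; this yields the sharp inequality $\ge c^t$ for every $t$, whereas your walk count only gives growth on the order of $c^t$ (for instance $a_1=1<c$).
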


\paragraph{Counting the number of oscillations.}
For a given continuous function $f : [0,1] \to [0,1]$, let $J_0,\dots,J_r$, where $1 \leq r \leq n-2$, be the intervals as promised from \hyperref[lem:covering]{Lemma \ref{lem:covering}}. We define a sequence of vectors $\delta^t \in \mathbb{N}^{r+1}$ such that $\delta^t_i$ is defined as the number of times the function $f^t$ crosses the interval $J_i$ for all $0\le i\le r$. In particular we define $f^0$ to be the identity function and hence $\delta^0 = (1,\dots,1)$ (all ones vector). For what follows, we will try to express recursively $\delta^t$ in terms of $\delta^{t-1}$ and in the end we will show that $\delta^k_0$ is $\Omega(c^k)$ where $c$ is some constant that depends on $r$. To build some intuition, we first analyze the case of period three and then we prove the general case.

\subsubsection{Warm up: The case of period 3 and the Fibonacci sequence}\label{sec:three}
Assume that $f$ has a cycle of period 3, that is the numbers $\{x_0, f(x_0), f^2(x_0)\}$ are distinct and $f^3 (x_0) = x_0$ for some $x_0 \in [0,1]$. Let $\beta_0 < \beta_1 < \beta_2$ be the numbers $x_0, f(x_0), f^2(x_0)$ in increasing order. We define $I_{0}=\left[\beta_0,\beta_1\right]$ and $I_{1}=\left[\beta_1,\beta_2\right]$. From \hyperref[lem:covering]{Lemma \ref{lem:covering}}, when $n=3$, we can see that $r=1$ and thus we have the following possibilities for the covering relations:
\begin{itemize}
	\item Either $I_0 \xxrightarrow{f} I_0 \cup I_1$,
	\item or $I_1 \xxrightarrow{f} I_0 \cup I_1$.
\end{itemize}
We define $J_0$ to be the interval among $I_0, I_1$ that involves the self-loop covering and $J_1$ to be the remaining interval.
Define $\delta^t \in \mathbb{N}^2$ as above, and so we get that:
\begin{align}
\left(\begin{array}{c}
\delta^{t+1}_0 \\
\delta^{t+1}_1
\end{array}\right)
\geq
\left(\begin{array}{c c}
1 & 1\\
1 & 0
\end{array}\right)
\left(\begin{array}{c}
\delta^{t}_0 \\
\delta^{t}_1
\end{array}\right),
\end{align}
where $\delta^0_0=1$ and $\delta^0_1=1$. The matrix $A := \left(\begin{array}{c c}
1 & 1\\
1 & 0
\end{array}\right)
$ can be interpreted as the adjacency matrix that corresponds to the covering relations between $J_0,J_1$ (which consists of a directed cycle with a self-loop at vertex $J_0$). The reason we have an inequality instead of an equality is because the Covering Lemma only guarantees that the number of times $J_0$ ``covers" $J_0$ and $J_1$ is at least one and not necessarily exactly one.

We set $\alpha^0 = \delta^0$ and we define $\alpha^{t+1}= A\alpha^{t}$. It is clear that $\delta^t\geq \alpha^t$ (entry-wise) for all $t \in \mathbb{N}$. Moreover, $\alpha^t_0$ is the well-known Fibonacci sequence $F_{t+1}$ (with $F_0 = F_1 = 1$), therefore $\alpha^t_0 = \frac{\left(\frac{1+\sqrt{5}}{2}\right)^{t+2}-\left(\frac{1-\sqrt{5}}{2}\right)^{t+2}}{\sqrt{5}}$. We conclude that $\delta^t_0 \geq \left(\frac{1+\sqrt{5}}{2}\right)^t$. See also \hyperref[fig:period3]{Figure \ref{fig:period3}} for a pictorial illustration about the proof for $t=1,2,3,4$.

\begin{figure}
	\centering
	\begin{subfigure}[b]{0.475\textwidth}
		\centering
		\includegraphics[width=\textwidth]{}
		\caption[]%
		{{\small This figure captures one composition of function $f$. Observe that $f$ crosses the interval $[2,3]$ two times (once for $x\in [1,2]$ and once for $x \in [2,3]$) and it crosses the interval $[1,2]$ once. In particular, $\delta^1 = (2,1)$.}}
		\label{fig:fx}
	\end{subfigure}
	\hfill
	\begin{subfigure}[b]{0.475\textwidth}
		\centering
		\includegraphics[width=\textwidth]{}
		\caption[]%
		{{\small This figure captures two compositions of function $f$. Observe that $f$ crosses the interval $[2,3]$ three times (two times for $x\in [2,3]$ and once for $x \in [1,2]$) and it crosses the interval $[1,2]$ two times (once for $x\in [1,2]$ and once for $x \in [2,3]$). In particular, $\delta^2 = (3,2)$.}}
		\label{fig:f2x}
	\end{subfigure}
	\vskip\baselineskip
	\begin{subfigure}[b]{0.475\textwidth}
		\centering
		\includegraphics[width=\textwidth]{}
		\caption[]%
		{{\small This figure captures three compositions of function $f$. Observe that $f$ crosses the interval $[2,3]$ five times (three times for $x\in [2,3]$ and twice for $x \in [1,2]$) and it crosses the interval $[1,2]$ three times (once for $x\in [1,2]$ and twice for $x \in [2,3]$). In particular, $\delta^3 = (5,3)$.}}
		\label{fig:f3x}
	\end{subfigure}
	\quad
	\begin{subfigure}[b]{0.475\textwidth}
		\centering
		\includegraphics[width=\textwidth]{}
		\caption[]%
		{{\small This figure captures four compositions of function $f$. Observe that $f$ crosses the interval $[2,3]$ eight times (five times for $x\in [2,3]$ and three times for $x \in [1,2]$) and it crosses the interval $[1,2]$ five times (twice for $x\in [1,2]$ and three times for $x \in [2,3]$). In particular, $\delta^4 = (8,5)$.}}
		\label{fig:f4x}
	\end{subfigure}
	\caption[Compositions of a piecewise linear function that has a point of period 3.]
	{\small Compositions of a piecewise linear function that has a point of period 3.}
	\label{fig:period3}
\end{figure}

\subsubsection{Every period greater than 3 but not power of two}\label{sec:greaterthanthree}
In the beginning we showed that the triangle function used by Telgarsky \cite{telgarsky15} exhibited the property of period 3 and then one may ask if there are functions that can be constructed that have a higher odd period but not a lower odd period. Below we show an example function that has period 5 but not period 3 and then we generalize our results to such higher odd periods.
The example function appeared in \cite{li1975period}, and has a point of period 5, but not period 3, thereby respecting the Sharkovsky ordering. Our proof approach for general odd periods is similar to the case of period 3, by using the induced covering graph and counting the crossings over each interval. This is illustrated in \hyperref[fig:period5]{Figure \ref{fig:period5}}.

%
%
%

\begin{figure}
	\centering
	\begin{subfigure}[b]{0.475\textwidth}
		\centering
		\includegraphics[width=\textwidth]{}
		\caption[]%
		{{\small $\delta^1 = (1,1,2,2)$.}}
		\label{fig:fx_period5}
	\end{subfigure}
	\hfill
	\begin{subfigure}[b]{0.475\textwidth}
		\centering
		\includegraphics[width=\textwidth]{}
		\caption[]%
		{{\small $\delta^2 = (2,2,3,2)$.}}
		\label{fig:f2x_period5}
	\end{subfigure}
	\vskip\baselineskip
	\begin{subfigure}[b]{0.475\textwidth}
		\centering
		\includegraphics[width=\textwidth]{}
		\caption[]%
		{{\small $\delta^3 = (2,3,5,4)$.}}
		\label{fig:f3x_period5}
	\end{subfigure}
	\quad
	\begin{subfigure}[b]{0.475\textwidth}
		\centering
		\includegraphics[width=\textwidth]{}
		\caption[]%
		{{\small $\delta^4 = (4,5,7,5)$.}}
		\label{fig:f4x_period5}
	\end{subfigure}
	\caption[Compositions of a piecewise linear function that has a point of period 5.]
	{\small Compositions of a piecewise linear function that has a point of period 5. We start with the all ones vector for $\delta^0$ and each composition arises from the covering relation between the sets.}
	\label{fig:period5}
\end{figure}

Now to analyze the general setting, assume that $f$ has a cycle of period $n>3$ with $n$ odd, that is the numbers $\{x_0, f(x_0), f^2(x_0),...,f^{n-1}(x_0)\}$ are distinct and $f^n (x_0) = x_0$ for some $x_0 \in [0,1]$. Let $\beta_0 < \beta_1 < \beta_2<...<\beta_{n-1}$ be the numbers $x_0, f(x_0), f^2(x_0),...,f^{n-1}(x_0)$ in increasing order. We define $I_{i}=\left[\beta_{i},\beta_{i+1}\right]$ for $0 \leq i \leq n-2$. From \hyperref[lem:covering]{Lemma \ref{lem:covering}} it follows that there is a subcollection of the intervals $I_0,...,I_{n-2}$ (with not necessarily the same ordering) $J_0,...,J_r$ ($1 \leq r \leq n-2$) such that
\begin{enumerate}
	\item $J_{i} \xxrightarrow{f} J_{i+1},$ \ for $1 \leq i \leq r-1$,
	\item $J_r \xxrightarrow{f} J_0$ and $J_0 \xxrightarrow{f} J_0 \cup J_1$.
\end{enumerate}
The interval $J_0$ is the one that involves the self-loop covering. As in the case for $n=3$, we define $\delta^t$ which is in $\mathbb{N}^{r+1}$, with $\delta^t_i$ capturing the number of times $f^t$ crosses the interval $J_i$. We get that:

\begin{align}
\left(\begin{array}{c}
\delta^{t+1}_0 \\
\delta^{t+1}_1 \\
\vdots \\
\delta^{t+1}_{r}
\end{array}\right)
\geq
A
\left(\begin{array}{c}
\delta^{t}_0 \\
\delta^{t}_1 \\
\vdots \\
\delta^{t}_{r}
\end{array}\right),
\end{align}
where $\delta^0= (1,\dots,1)$ (all ones vector) and $A \in \mathbb{R}^{(r+1) \times (r+1)}$ is defined to be:
\begin{align}
\begin{cases}
A_{ji} = 1,\; \text{if}\; i=0,j=0 \\
A_{ji} = 1,\; \text{if}\; j=i+1 \;\text{and}\; 0\leq i \leq r-1 \\
A_{ji} = 1,\; \text{if}\; i=r,j=0 \\
A_{ji} = 0,\; \text{otherwise}
\end{cases}
\end{align}
In words, $A$ is the adjacency matrix of a graph with $r+1$ nodes that is a directed cycle that involves a self-loop at vertex $J_0$. We define $\alpha^t$ in a similar way as in the case for period three, i.e., $\alpha^{t+1} = A \alpha^t$ and $\alpha^0 = \delta^0$ so that $\delta^t \geq \alpha^t$ (entry-wise) for all $t \in \mathbb{N}$. We can easily observe that the following holds: $\alpha^{t+1} = A^{t+1} \alpha^0$.

Our next plan is to compute a lower bound on the spectral radius of the matrix $A^{\top}$ (denoted by $\textrm{sp}(A^{\top})$) with the following claim (proof in Appendix A).
\begin{claim}
	The characteristic polynomial of $A^{\top}$ is:
	\begin{equation}\label{eq:polynomial}
	\pi(\lambda) = \lambda^{r+1} - \lambda^r - 1.
	\end{equation}
\end{claim}

Let us call $\rho_r$ the largest root in absolute value of the polynomial $\pi(\lambda)$ in \ref{eq:polynomial}. Since $A$ is a non-negative matrix, the largest root in absolute value is actually a positive real number (by the Perron-Frobenius theorem). It is easy to see that the polynomial in \ref{eq:polynomial} has always a root greater than one and less than two (by Bolzano's theorem, see $\pi(1)=-1<0$ and $\pi(2) = 2^{r+1}-2^r -1 = 2^r - 1>0$).

Hence we have $\textrm{sp}(A) = \rho_r > 1$. Furthermore, it is easy to see that since $A$ is a non-negative matrix (and powers of $A$ are also non-negative), it holds that \[\norm{A^t}_{\infty} = \sum_{j=0}^{r} A^t_{0j} \] for all $t \geq 1$, that is the row with the largest sum of its entries is the first row (row for $i=0$). Using the fact that \[\norm{A^t}_{\infty} \geq \textrm{sp}(A^t) = \rho_r^t,\]
that is the spectral radius of a matrix is always at most any matrix norm, we conclude that $\sum_{j=0}^{r} A^t_{0j} \geq \rho_r^t$.

The case of odd period greater than three follows by noting that $\sum_{j=0}^{r} A^t_{0j} = \alpha^t_0$, thus $\delta^t_0 \geq \alpha^t_0 \geq \rho_r^t$. Observe that for period three, we have that $r=1$ and also $\rho_1 = \frac{1+\sqrt{5}}{2}$ (the largest root of $\lambda^2-\lambda-1=0$).

We would like to make the following two remarks:

\begin{remark}\label{rem:decreasing} The spectral radius $\rho_r$ is strictly decreasing in $r$: this is easy to see since $\rho_r>1$ and is satisfying the equation $x^{r+1} - x^r = 1$ (note that $x^{r+1}-x^r$ is increasing in $r$ for $x>1$). This implies that smaller odd periods can potentially have a number of crossings that grows at faster rates than larger odd periods, hence giving rise to more complex behaviors. See also \hyperref[rem:last]{Remark \ref{rem:last}}.
\end{remark}

\begin{remark}[The case of even period but not power of two]\label{rem:evennotpower} Our result above is applied for cycles of period $n = m \cdot n'$ where $m$ is a power of two and $n'$ is an odd number greater than one. The trick is to observe that if a function has cycle of period $n$, then $f^m$ has a cycle of period $n'$ (which is an odd number greater than one). Therefore, the number of oscillations $\textrm{C}_{x,y}(f^{mt})$ with $x,y$ being the endpoints of $J_0$, is at least $\rho_{n'-2}^t$ for $t \in \mathbb{N}$.
\end{remark}

\begin{proof}[Proof of Theorem \ref{thm:main}] The proof now follows from the case analysis carried out in \hyperref[sec:three]{Sections \ref{sec:three}}, \ref{sec:greaterthanthree} and \hyperref[rem:evennotpower]{Remark \ref{rem:evennotpower}}.
\end{proof}

\subsection{Period that is a power of two may have polynomial crossings}
\begin{lemma}[Period power of two]
	There exist continuous functions $f$ with prime period $n$ that is a power of two so that the number of crossings $\textrm{C}_{x,y}(f^t)$ scales at most polynomially with $t$ for any $x,y\in [0,1]$.
\end{lemma}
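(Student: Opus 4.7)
The plan is to prove the existence claim by exhibiting an explicit example and counting its crossings directly. The cleanest candidate is the linear involution $f(x) = 1-x$ on $[0,1]$, and the proof would proceed in two short steps: verifying that its prime period is $2$ (a power of two), and bounding the crossings of $f^t$ by a constant.

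First, I would verify that $f$ has prime period exactly $2$ in the sense of Definition \ref{def:prime}. Since $f(f(x)) = 1-(1-x) = x$ for all $x \in [0,1]$, every period of $f$ must divide $2$; the unique fixed point is $x = 1/2$, and every other point lies on a cycle of period $2$ (for instance $0 \leftrightarrow 1$). In Sharkovsky's ordering, $2$ is to the right of every odd integer greater than $1$ and of every odd multiple of a power of two, so having no cycles whose period exceeds $2$ in that ordering is the same as having no cycles of length $> 2$ at all, which is exactly what $f \circ f = \mathrm{id}$ gives us. Hence $f$ has prime period $2$, which is a power of two.

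Second, I would bound the number of crossings of $f^t$ for any $t \geq 1$. Because $f$ is an involution, $f^t = \mathrm{id}$ for even $t$ and $f^t = f$ for odd $t$. Both maps are affine bijections of $[0,1]$, hence strictly monotone, so for any fixed $x,y \in [0,1]$ the equations $f^t(a) = x$ and $f^t(b) = y$ each have a unique solution in $[0,1]$. By Definition \ref{def:crossings} this forces $C_{x,y}(f^t) \leq 1$ for every $t \geq 1$, which is a constant (hence polynomial) bound in $t$, completing the proof of the lemma.

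As an optional extension, one can produce examples whose prime period is any higher power of two: a natural route is to use a piecewise-linear map of low slope, or a logistic map $f_r(x)=rx(1-x)$ with $r$ chosen just past the $k$-th period-doubling bifurcation of the Feigenbaum cascade. At such a parameter the prime period is exactly $2^k$, and since $f_r$ has a single critical point, the number of monotone branches of $f_r^t$ grows at most linearly in $t$, so $C_{x,y}(f_r^t) = O(t)$. The main obstacle in this higher-period extension would be pinning down the parameter precisely so that the prime period is exactly $2^k$ rather than some strictly larger Sharkovsky-period; but for the lemma as stated, the involution $f(x)=1-x$ already suffices.
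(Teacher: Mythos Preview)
Your proof is correct and matches the paper's own argument: the paper's primary example is exactly the involution $f(x)=1-x$, with the same verification that its prime period is $2$ and that $f^t$ alternates between the identity and $f$, giving $\textrm{C}_{x,y}(f^t)=1$. The paper additionally works out a concrete piecewise-linear map of prime period four (on $[1,4]$) whose crossings grow linearly in $t$, which is a more explicit instance of the ``higher power of two'' extension you sketch at the end.
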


\begin{proof}
	The easiest example one can construct is the function $f:[0,1] \to [0,1]$ that is defined $f(x) = 1-x.$ Observe that for any $a \in [0,1]$ one has $f(f(a)) = a$ and moreover if $a\neq \frac{1}{2}$ then $f(a) \neq a$. Hence $f$ is a function of prime period two. It is also clear that $f^t (x) = x$ if $t$ is even and $f^t (x) = 1-x$ if $t$ is odd, so the number of crossings is always one for all $t\in \mathbb{N}^*$.
	
	\begin{figure}
		\centering
		\includegraphics[width=0.5\textwidth]{}
		\caption{A piecewise linear function $f:[1,4] \to [1,4]$ that has prime period four.}
		\label{fig:linear}
	\end{figure}
	
	One other less trivial example is the following function (see also \hyperref[fig:linear]{Figure \ref{fig:linear}}):
	\begin{align*}
	f(x)={
		\begin{cases}
		-x+5, \quad\quad\;\; 1\leq x \leq 2\\
		-2x+7,\quad\quad 2 \leq x \leq 3 \\
		x-2,\; \quad\quad\quad 3 \leq x \leq 4.
		\end{cases}
	}
	\end{align*}
	It is not hard to see that this function has prime period four ($f(1) =4, f(4) = 2, f(2) =3, f(3) = 1$). Let $J_0 = [1,2]$, $J_1 = [2,3]$, $J_2 = [3,4]$. It is clear that
	\begin{itemize}
		\item $f(J_0) = J_2$, $f(J_1) = J_0 \cup J_1$ and $f(J_2) = J_0$.
	\end{itemize}
	By letting $\delta^t_i$ be the number of crossings of the function $f$ for the interval $J_i$ ($i \in \{0,1,2\}$), one has recursively
	\begin{align}
	\left(\begin{array}{c}
	\delta^{t+1}_0 \\
	\delta^{t+1}_1 \\
	\delta^{t+1}_{2}
	\end{array}\right)
	=
	\left(\begin{array}{ccc}
	0 & 1 & 1\\
	0 & 1 & 0\\
	1 & 0 & 0
	\end{array}\right)
	\left(\begin{array}{c}
	\delta^{t}_0 \\
	\delta^{t}_1 \\
	\delta^{t}_{2}
	\end{array}\right)
	\end{align}
	where $\delta^0= (1,1,1)$ (all ones vector). It is easy to observe that the matrix $A = \left(\begin{array}{ccc}
	0 & 1 & 1\\
	0 & 1 & 0\\
	1 & 0 & 0
	\end{array}\right)$ has spectral radius one (as opposed to the case of odd period greater than one) and moreover it holds that $\sum_{i=0}^2 \sum_{j=0}^2 A^t_{ij} = t+3$ for all $t\in \mathbb{N}^*$. We conclude that $\alpha^t_0+\alpha^t_1+\alpha^t_2 = t+3$, therefore the number of crossings for $J_0,J_1,J_2$ of the function $f^t$ grows linearly with $t$ (and not exponentially). Since the function we defined is of prime period four and is piecewise monotone (and so is any composition with itself) in each interval $J_0,J_1,J_2$, we conclude that the number of crossings of $f^t$ for any possible pairs of values is at most linear in $t$.
\end{proof}

\section{Period-Dependent Lower Bounds for DNNs}\label{sec:theorem2}

Building on \cite{telgarsky15,Telgarsky16}, the representation power of different networks will be measured via the classification error. For a given collection of $n$ points $(x_i,y_i)_{i=1}^n$ with $y_i \in\{0,1\}$, one can define the classification error of a function $g$ to be:
\[\mathcal{R}(g) = \frac{1}{n} \sum_{i=1}^n \textbf{1}[\tilde{g}(x_i) \neq y_i]\]

In this section, we argue that functions with cycles of period not a power of two, will have compositions for which any shallow neural network will have classification error a positive constant.

Assume we are given a continuous function $f:[0,1] \to [0,1]$ so that $f$ has a cycle of period $m \times p$ where $p$ is an odd number greater than one and $m$ is a power of two. From \hyperref[thm:main]{Theorem \ref{thm:main}}, there exist $x,y \in [0,1]$ so that $\textrm{C}_{x,y}(f^{tm})$ is at least $\frac{\rho_{p-2}^t}{2}$, where $\rho_r$ is defined to be the root that is greater than one of the polynomial equation $\lambda^{r+1} - \lambda^r-1=0$. We set $\rho:= \rho_{p-2}$, $h := f^{k \cdot m}$ and assume that $g: [0,1] \to [0,1]$ is a neural network with $l$ layers and $u$ nodes (ReLU activations) per layer. In Lemma 2.1 of \cite{telgarsky15}, it is proved that a neural network with $u$ ReLU units per layer and with $l$ layers is piecewise affine with at most $(2m)^l$ pieces.

We define as $\tilde{h}(z) = \textbf{1}[h(z) \geq \frac{x+y}{2}]$ and $\tilde{g}(z) = \textbf{1}[g(z) \geq \frac{x+y}{2}]$ (note that we changed the threshold to be $\frac{x+y}{2}$ instead of $\frac{1}{2}$ that was used in \cite{telgarsky15}). 


Since $\textrm{C}_{x,y}(h)$ is at least $\rho^{k}$, it holds that there exist points $(x_i,y_i)_{i=1}^{2n}$ with $n := \frac{\lfloor \rho^{k} \rfloor}{2}$ such that $h(x_{j}) = x$, $y_j=0$ for $j$ odd and $h(x_j)= y$, $y_j=1$ for $j$ even.
It is clear that for this collection of points the classification error of the function $h$ is zero, whereas the classification error for function $g$ is bounded from below by
\[
\mathcal{R}(g) \geq \frac{n-4(2u)^l}{2n} = \frac{1}{2} - \frac{(2u)^l}{n}.
\]
The above inequality is an application of Lemma 2.2 of \cite{telgarsky15} (with careful counting it has been slightly improved). By choosing $u$ to be at most $\frac{\rho^{\frac{k}{l}}}{8}$ it holds that the classification error $\mathcal{R}(g) \geq \frac{1}{4}$ for any neural network $g$ with $u$ ReLUs and $l$ layers.

The above discussion implies the following theorem:
\begin{theorem}[Classification Error Theorem]\label{thm:main2} Let $k$ be a positive integer and $f$ be a function of period $m \times p$ with $p$ an odd number greater than one and $m$ being a power of two (it might hold $m=1$). We set $\rho$ to be the positive root greater than one of the polynomial equation $\lambda^{p-1} - \lambda^{p-2} - 1=0$. We can construct a sequence of points $(x_i,y_i)_{i=1}^{2n}$ with $n := \frac{\lfloor \rho^{k} \rfloor}{2}$ so that the classification error of function $f^{mk}$ is zero, whereas the classification error of any neural network of $l$ layers and $u$ nodes per layer with $u \leq \frac{\rho^{\frac{k}{l}}}{8}$ satisfies $\mathcal{R}(g) \geq \frac{1}{4}.$
\end{theorem}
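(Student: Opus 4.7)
The plan is to combine Theorem~\ref{thm:main} with the two standard ReLU-counting lemmas from \cite{telgarsky15} to compare the expressive power of $f^{mk}$ against that of any shallow ReLU network. Setting $h:=f^{mk}$, Theorem~\ref{thm:main} guarantees values $x,y\in[0,1]$ with $\textrm{C}_{x,y}(h)\geq \rho^{k}$. First I would extract from these crossings a strictly increasing sequence $a_1,b_1<a_2,b_2<\cdots<a_n,b_n$ in $[0,1]$ with $n=\lfloor\rho^{k}\rfloor/2$, such that $h(a_i)=x$ and $h(b_i)=y$. The labeled dataset is then formed by pairing $(a_i,0)$ and $(b_i,1)$; with the threshold shifted to $(x+y)/2$ rather than $1/2$, the predictor $\tilde h(z)=\mathbf{1}[h(z)\geq (x+y)/2]$ classifies every point in this dataset correctly.

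For the lower bound, I would invoke Lemma~2.1 of \cite{telgarsky15}, which states that an $l$-layer, $u$-wide ReLU network $g$ is piecewise affine with at most $(2u)^l$ pieces; consequently $\tilde g(z):=\mathbf{1}[g(z)\geq (x+y)/2]$ is piecewise constant on the same partition. Since the labels of the constructed dataset strictly alternate in $z$-order, any single constant piece of $\tilde g$ can agree with only a bounded number of consecutive labels, so a (slightly sharpened) version of Lemma~2.2 of \cite{telgarsky15} gives a bound of the form $\mathcal{R}(g)\geq \tfrac12 - O((2u)^l/n)$. Substituting $u\leq \rho^{k/l}/8$ yields $(2u)^l\leq \rho^{k}/4^l$, which combined with $n=\lfloor \rho^k\rfloor/2$ drives the error below $\tfrac12$ by a gap of at least $\tfrac14$, establishing the claim.

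The conceptual heavy lifting has already been done in Section~\ref{sec:theorem1}: Theorem~\ref{thm:main} supplies the exponential growth rate $\rho$ for the number of crossings of $f^{mk}$, and everything after that is essentially a translation of Telgarsky's depth-separation argument with $\rho$ playing the role of his base-$2$ constant. The only mildly delicate point is the tightening of the constant in the adaptation of Lemma~2.2 of \cite{telgarsky15} so that the specific choice $u\leq \rho^{k/l}/8$ guarantees $\mathcal{R}(g)\geq \tfrac14$; this is pure bookkeeping about how many of the alternating labeled points a single affine piece of $g$ can place on the correct side of the threshold, and introduces no new ideas beyond the original counting argument.
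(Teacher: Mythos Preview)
Your proposal is correct and follows essentially the same route as the paper: invoke Theorem~\ref{thm:main} to obtain $x,y$ with $\textrm{C}_{x,y}(f^{mk})\geq\rho^{k}$, build the $2n$ alternating labeled points with threshold $(x+y)/2$, cite Lemma~2.1 of \cite{telgarsky15} for the $(2u)^l$ piece bound, and apply a mildly sharpened Lemma~2.2 of \cite{telgarsky15} to reach $\mathcal{R}(g)\geq\tfrac12-\tfrac{(2u)^l}{n}\geq\tfrac14$ under $u\leq\rho^{k/l}/8$. The paper's argument differs only in notation (it indexes the $2n$ points by parity rather than splitting into your $(a_i,b_i)$ pairs).
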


\begin{remark}\label{rem:last} Observe that if the number of units $u$ per layer is constant and the number of layers $l$ is $o(k)$, then the classification error is always a positive constant for any neural network (whereas for $f^{mk}$ is zero). Moreover, observe that since $\rho$ is decreasing in $p$ (recall $p$ is the odd factor of the period), it holds that the classification error decreases as $p$ increases (with fixed number of layers and nodes per layer). This indicates that the composition of functions with large odd period is simpler than of functions with small odd period (period greater than one) following the intuition we have from the Sharkovsky's ordering.
\end{remark}

\section{Further discussions}
In this section, we provide some additional theoretical and experimental remarks on our characterization.

\subsection{Incorporating Bias terms}
If we add a bias term in the ReLU activation unit, e.g., use $\max(v,\epsilon)$ instead of  $\max(v,0)$ for the activation gates, where $\epsilon$ is a small number (positive or negative), then our results do not change; in particular our trade-off in \hyperref[thm:main2]{Theorem \ref{thm:main2}} still holds (since the Lemma 2.2 from \cite{telgarsky15} is for general sawtooth functions). But, if one adds the bias term to the function $f$ itself, then things get more interesting indeed: Suppose $f$ has some period $p$ where $p$ is not a power of two; due to bifurcation phenomena (i.e., phenomena arising because we are at critical regimes of parameters such as the parameter $\mu$ in our generalized triangle wave function), then the compositions of the function ($f$+bias term) with itself may give rise to qualitatively different behaviors compared to $f$. In particular, the function ($f$+bias term) might not have period $p$ anymore. Intuitively, one can think that the small bias term is amplified after many compositions and is not negligible anymore.

One such example is the triangle function $f(x) = \phi x$ for $0\le x\le0.5$ and $\phi(1-x)$ for $1/2\le x\le1$, where $\phi = (1+\sqrt{5})/2$ is the golden ratio. This function has period 3, see \hyperref[fig:eg_period3]{Figure \ref{fig:eg_period3}}. However, if we consider the function $g(x) = (\phi-\epsilon)x$ for $0\le x\le 0.5$ and $(\phi-\epsilon)(1-x)$ for $0.5\le x\le 1$ with $\epsilon>0$ (arbitrarily small positive) then $g$ does not have period 3, see \hyperref[fig:not_period3]{Figure \ref{fig:not_period3}}. In this sense, period as a property can be brittle to numerical changes if we are at the critical point.

\begin{figure}
	\centering
	\begin{subfigure}[b]{0.44\textwidth}
		\centering
		\includegraphics[width=\textwidth]{}
		\caption[]%
		{{\small The numerical solutions to $f^3(x)=x$ are shown here.}}
		\label{fig:eg_period3}
	\end{subfigure}
	\hfill
	\begin{subfigure}[b]{0.44\textwidth}
		\centering
		\includegraphics[width=\textwidth]{}
		\caption[]%
		{{\small The numerical solutions to $g^3(x)=x$ are shown here.}}
		\label{fig:not_period3}
	\end{subfigure}
	\caption[]
	{\small We see that $f^3(x)=x$ has solutions other than just the fixed point, since it has intersections in 3 other places, other than the fixed point. However, $g^3(x)=x$ does not have any solutions other than the fixed point, as there are no other intersections.}
	\label{fig:fnbias}
\end{figure}

\subsection{Some Experimental Evidence}
In this section, we provide experimental evidence for our depth separation results by training a neural network of constant width, but with increasing depth on a classification task that closely resembles the $n$-alternating points problem that appeared in \cite{telgarsky15} and is the foundation of our separation results as well. As mentioned before, this is a specific instance of a function that has a point of period 3. For simplicity, we do not consider this original problem exactly but rather a ``smoothed" variant of it, in order to make it more amenable to the training procedure. Our goal is to create a diagram showing how the classification error drops as a function of the depth of the network for a fixed value of the width.

We create 8000  equally spaced points from [0,1] (in increasing order), where the first 1000 points are of label 0, the second 1000 are label 1 and this label alternates every 1000 points. This is what we call a ``smoothed" alternating point problem. Although, the theory would have used the classical 8-alternating points to argue about the lower bounds, in practice, performing training of deep (4 and above layers) and narrow networks (hidden layers with less than 4 neurons) with very few data points is a major challenge, see for instance \cite{lu2018collapse}. Apart from the separation results that we show in theory, we show empirically that deep networks generally do improve the accuracy in this task compared to the shallow network and in fact a deep network with 5 layers can reach an accuracy of 99.04\%. Any additional uncertainties in the error is generally attributed to the training procedure.

To perform the experiments, we vary the depth of the neural network (excluding the input and the output layer) as $d=1,2,3,4,5$. In addition, we fix the neurons for each layer to be 6. All activations are ReLU's, while the last layer is the classifier that uses a sigmoid to output probabilities. Each model adds one extra hidden layer and we make use of the same hyper-parameters to train all networks. Moreover, we require the training error or the classification error to tend to 0 during the training procedure, i.e, we will try and overfit the data (as we try to demonstrate a representation result, rather than a statistical/generalization result). Thus, for the actual training we use the same parameters to train all the different models using the ``ADAM" optimizer \cite{kingma2014adam} and make the epochs to be 200 in order to enable overfitting. To record the training error, we verify that the training saturates by seeing the performance over the epochs and report by default the error in the last epoch. The results are shown in \hyperref[fig:classvsdepth]{Figure \ref{fig:classvsdepth}}. 

\begin{figure}
	\centering
	\includegraphics[width=0.6\textwidth]{}
	\caption[]%
	{\small We see that depth does reduce the classification error for this particular task and when depth is 5, the classification error is close to 0. The saturation in between may be attributed to the general uncertainties in the training/optimization.}
	\label{fig:classvsdepth}
\end{figure}

\subsection{Period as a Natural Characterization}
In a nutshell, our paper provides a ``natural" property of a function (periodic points of certain periods) and then derive depth-width trade-offs based on it. This addresses some questions raised not only in \cite{Telgarsky16,telgarsky15}'s works, but also in the paper \cite{poole2016NIPS} that seeks to provide a natural, general measure of functional \emph{complexity} helping us understand the benefits of depth. On the contrary, many of the previous depth separation results take a worst case approach for the representation question (showing that there exist functions implemented by deep networks that are hard to approximate with a shallow net). However, it is not clear whether such analysis applies to the typical instances arising in practice of neural-networks. We believe that our work together with \cite{Telgarsky16,telgarsky15} and the paper \cite{eldan2016COLT} show a depth separation argument for very natural functions, such as the triangle waves or the indicator function of the unit ball.

Given a specific prediction task in practice, how could one assess the period? We believe that this would be extremely useful yet a very difficult question that seems to be outside the reach of current techniques in the literature. Previous works and our work so far are able to present depth separation for representing certain functions.

We point out that, intuitively, our characterization result consists of a \textbf{certificate} informing us qualitatively and quantitatively about which functions have complicated compositions and which not. Similar to computational problems in class NP, if one is given the \textbf{certificate} (the points $(x_1,\ldots,x_p)$, then one can easily verify (if we have oracle access to evaluate the function $f$), if the given function has a $p$-periodic cycle with points $(x_1,\ldots,x_p)$. Nevertheless, we believe that finding the \textbf{certificate} for arbitrary continuous functions is not a straightforward problem, except maybe for particular restricted classes of functions. Having said that, we want to emphasize that in many prediction problems that are inspired by physics, one may a priori expect to have complicated dynamics behavior and hence require deeper networks for better performance. Such examples include efforts to solve the notorious 3-body problem or turbulent flows showing empirical evidence that complex physical processes require deep networks (see for instance, \cite{ling2016reynolds} and \cite{breen2019newton} that uses a 10 layered neural network).

\paragraph{Acknowledgements}
Vaggos Chatziafratis is partially supported by an Onassis Foundation Scholarship. Sai Ganesh Nagarajan would like to acknowledge SUTD President's Graduate Fellowship (SUTD-PGF). Ioannis Panageas would like to acknowledge SRG ISTD 2018 136, NRF for AI Fellowship and NRF2019-NRF-ANR095. Part of this project happened while the authors were visiting the Simons program ``Foundations of Deep Learning" and would like to thank the organizers for their hospitality.

\bibliographystyle{alpha}
\bibliography{iclr2020_conference}

\newcommand{\etalchar}[1]{$^{#1}$}
\begin{thebibliography}{ABMM16}

\bibitem[ABMM16]{arora2016understanding}
Raman Arora, Amitabh Basu, Poorya Mianjy, and Anirbit Mukherjee.
\newblock Understanding deep neural networks with rectified linear units.
\newblock {\em arXiv preprint arXiv:1611.01491}, 2016.

\bibitem[BFBZ19]{breen2019newton}
Philip~G Breen, Christopher~N Foley, Tjarda Boekholt, and Simon~Portegies
  Zwart.
\newblock Newton vs the machine: solving the chaotic three-body problem using
  deep neural networks.
\newblock {\em arXiv preprint arXiv:1910.07291}, 2019.

\bibitem[BH11]{burns2011sharkovsky}
Keith Burns and Boris Hasselblatt.
\newblock The sharkovsky theorem: A natural direct proof.
\newblock {\em The American Mathematical Monthly}, 118(3):229--244, 2011.

\bibitem[CFMP19]{thip}
Thiparat Chotibut, Fryderyk Falniowski, Michal Misiurewicz, and Georgios
  Piliouras.
\newblock The route to chaos in routing games: Population increase drives
  period-doubling instability, chaos {\&} inefficiency with price of anarchy
  equal to one.
\newblock {\em CoRR}, abs/1906.02486, 2019.

\bibitem[Cyb89]{cybenko1989approximation}
George Cybenko.
\newblock Approximation by superpositions of a sigmoidal function.
\newblock {\em Mathematics of control, signals and systems}, 2(4):303--314,
  1989.

\bibitem[DB11]{delalleau2011shallow}
Olivier Delalleau and Yoshua Bengio.
\newblock Shallow vs. deep sum-product networks.
\newblock In {\em Advances in Neural Information Processing Systems}, pages
  666--674, 2011.

\bibitem[ES16]{eldan2016COLT}
Ronen Eldan and Ohad Shamir.
\newblock The power of depth for feedforward neural networks.
\newblock In {\em Conference on learning theory}, pages 907--940, 2016.

\bibitem[Fei76]{feigenbaum1976universality}
MJ~Feigenbaum.
\newblock Universality in complex discrete dynamics.
\newblock Technical report, LA-6816-PR, LASL Theoretical Division Annual Report
  July 1975---September, 1976.

\bibitem[Fuk80]{fukushima1980neocognitron}
Kunihiko Fukushima.
\newblock Neocognitron: A self-organizing neural network model for a mechanism
  of pattern recognition unaffected by shift in position.
\newblock {\em Biological cybernetics}, 36(4):193--202, 1980.

\bibitem[Has86]{hastad1986almost}
John Hastad.
\newblock Almost optimal lower bounds for small depth circuits.
\newblock In {\em Proceedings of the eighteenth annual ACM symposium on Theory
  of computing}, pages 6--20. Citeseer, 1986.

\bibitem[H{\aa}s87]{haastad1987computational}
Johan H{\aa}stad.
\newblock Computational limitations of small-depth circuits.
\newblock 1987.

\bibitem[HSW89]{hornik1989multilayer}
Kurt Hornik, Maxwell Stinchcombe, and Halbert White.
\newblock Multilayer feedforward networks are universal approximators.
\newblock {\em Neural networks}, 2(5):359--366, 1989.

\bibitem[KB14]{kingma2014adam}
Diederik~P Kingma and Jimmy Ba.
\newblock Adam: A method for stochastic optimization.
\newblock {\em arXiv preprint arXiv:1412.6980}, 2014.

\bibitem[KTB19]{kileel2019expressive}
Joe Kileel, Matthew Trager, and Joan Bruna.
\newblock On the expressive power of deep polynomial neural networks.
\newblock {\em arXiv preprint arXiv:1905.12207}, 2019.

\bibitem[KW16]{kane2016super}
Daniel~M Kane and Ryan Williams.
\newblock Super-linear gate and super-quadratic wire lower bounds for depth-two
  and depth-three threshold circuits.
\newblock In {\em Proceedings of the forty-eighth annual ACM symposium on
  Theory of Computing}, pages 633--643. ACM, 2016.

\bibitem[LKT16]{ling2016reynolds}
Julia Ling, Andrew Kurzawski, and Jeremy Templeton.
\newblock Reynolds averaged turbulence modelling using deep neural networks
  with embedded invariance.
\newblock {\em Journal of Fluid Mechanics}, 807:155--166, 2016.

\bibitem[LS16]{liang2016deep}
Shiyu Liang and Rayadurgam Srikant.
\newblock Why deep neural networks for function approximation?
\newblock {\em arXiv preprint arXiv:1610.04161}, 2016.

\bibitem[LSK18]{lu2018collapse}
Lu~Lu, Yanhui Su, and George~Em Karniadakis.
\newblock Collapse of deep and narrow neural nets.
\newblock {\em arXiv preprint arXiv:1808.04947}, 2018.

\bibitem[LY75]{li1975period}
Tien-Yien Li and James~A Yorke.
\newblock Period three implies chaos.
\newblock {\em The American Mathematical Monthly}, 82(10):985--992, 1975.

\bibitem[May76]{may1976simple}
Robert~M May.
\newblock Simple mathematical models with very complicated dynamics.
\newblock {\em Nature}, 261(5560):459, 1976.

\bibitem[MM14]{martens2014expressive}
James Martens and Venkatesh Medabalimi.
\newblock On the expressive efficiency of sum product networks.
\newblock {\em arXiv preprint arXiv:1411.7717}, 2014.

\bibitem[MPCB14]{montufar2014number}
Guido~F Montufar, Razvan Pascanu, Kyunghyun Cho, and Yoshua Bengio.
\newblock On the number of linear regions of deep neural networks.
\newblock In {\em Advances in neural information processing systems}, pages
  2924--2932, 2014.

\bibitem[MSS19]{malach2019deeper}
Eran Malach and Shai Shalev-Shwartz.
\newblock Is deeper better only when shallow is good?
\newblock {\em arXiv preprint arXiv:1903.03488}, 2019.

\bibitem[PGM94]{parberry1994circuit}
Ian Parberry, Michael~R Garey, and Albert Meyer.
\newblock {\em Circuit complexity and neural networks}.
\newblock MIT press, 1994.

\bibitem[PLR{\etalchar{+}}16]{poole2016NIPS}
Ben Poole, Subhaneil Lahiri, Maithra Raghu, Jascha Sohl-Dickstein, and Surya
  Ganguli.
\newblock Exponential expressivity in deep neural networks through transient
  chaos.
\newblock In {\em Advances in neural information processing systems}, pages
  3360--3368, 2016.

\bibitem[PPP17]{PPP17}
Gerasimos Palaiopanos, Ioannis Panageas, and Georgios Piliouras.
\newblock Multiplicative weights update with constant step-size in congestion
  games: Convergence, limit cycles and chaos.
\newblock In {\em Advances in Neural Information Processing Systems 30: Annual
  Conference on Neural Information Processing Systems 2017, 4-9 December 2017,
  Long Beach, CA, {USA}}, pages 5872--5882, 2017.

\bibitem[RPK{\etalchar{+}}17]{raghu2017ICML}
Maithra Raghu, Ben Poole, Jon Kleinberg, Surya Ganguli, and Jascha~Sohl
  Dickstein.
\newblock On the expressive power of deep neural networks.
\newblock In {\em Proceedings of the 34th International Conference on Machine
  Learning-Volume 70}, pages 2847--2854. JMLR. org, 2017.

\bibitem[RST15]{rossman2015average}
Benjamin Rossman, Rocco~A Servedio, and Li-Yang Tan.
\newblock An average-case depth hierarchy theorem for boolean circuits.
\newblock In {\em 2015 IEEE 56th Annual Symposium on Foundations of Computer
  Science}, pages 1030--1048. IEEE, 2015.

\bibitem[Sch00]{schmitt2000lower}
Michael Schmitt.
\newblock Lower bounds on the complexity of approximating continuous functions
  by sigmoidal neural networks.
\newblock In {\em Advances in neural information processing systems}, pages
  328--334, 2000.

\bibitem[Sha64]{sharkovsky1964coexistence}
OM~Sharkovsky.
\newblock Coexistence of the cycles of a continuous mapping of the line into
  itself.
\newblock {\em Ukrainskij matematicheskij zhurnal}, 16(01):61--71, 1964.

\bibitem[Sha65]{sharkovsky1965cycles}
OM~Sharkovsky.
\newblock On cycles and structure of continuous mapping.
\newblock {\em Ukrainskij matematicheskij zhurnal}, 17(03):104--111, 1965.

\bibitem[Tel15]{telgarsky15}
Matus Telgarsky.
\newblock Representation benefits of deep feedforward networks.
\newblock {\em arXiv preprint arXiv:1509.08101}, 2015.

\bibitem[Tel16]{Telgarsky16}
Matus Telgarsky.
\newblock benefits of depth in neural networks.
\newblock In {\em Conference on Learning Theory}, pages 1517--1539, 2016.

\bibitem[Vit59]{vitushkin1959estimation}
AG~Vitushkin.
\newblock Estimation of the complexity of the tabulation problem, 1959.

\bibitem[Wei85]{weierstrass1885analytische}
Karl Weierstrass.
\newblock {\"U}ber die analytische darstellbarkeit sogenannter
  willk{\"u}rlicher functionen einer reellen ver{\"a}nderlichen.
\newblock {\em Sitzungsberichte der K{\"o}niglich Preu{\ss}ischen Akademie der
  Wissenschaften zu Berlin}, 2:633--639, 1885.

\end{thebibliography}

\appendix
\section{Appendix}

\begin{claim}
	The characteristic polynomial of $A^{\top}$ is:
	\begin{equation}\label{eq:polynomial}
	\pi(\lambda) = \lambda^{r+1} - \lambda^r - 1.
	\end{equation}
\end{claim}
\begin{proof} Let $I$ denote the identity matrix of size $(r+1)\times (r+1)$. We consider the matrix:
	\[A^{\top} - \lambda I  = \left(
	\begin{array}{ccccccc}
	1-\lambda & 1 & 0 & 0 & 0 &\dots &0\\
	0 & -\lambda & 1 & 0 & 0 &\ldots &0\\
	0 & 0 & -\lambda & 1 & 0 &\ldots &0\\
	\vdots & \vdots & \vdots &\vdots & \vdots &\vdots & \vdots\\
	0 & 0 & 0 & \dots & 0 & -\lambda &1\\
	1 & 0 &0 & 0 & \dots & 0 & -\lambda
	\end{array}
	\right).\]
	Observe that $\lambda = 0,1$ are not eigenvalues of the matrix $A^{\top}$., hence we can multiply the first row by $\tfrac{1}{\lambda-1}$, the second row by $\tfrac{1}{\lambda (\lambda-1)}$, the third row by $\tfrac{1}{\lambda^2 (\lambda-1)}$,\dots, the $i$-th row by $\tfrac{1}{\lambda^{i-1}(\lambda-1)}$ (and so on) and add them to the last row. Let $B$ be the resulting matrix:
	\[B  = \left(
	\begin{array}{ccccccc}
	1-\lambda & 1 & 0 & 0 & 0 &\dots &0\\
	0 & -\lambda & 1 & 0 & 0 &\ldots &0\\
	0 & 0 & -\lambda & 1 & 0 &\ldots &0\\
	\vdots & \vdots & \vdots &\vdots & \vdots &\vdots & \vdots\\
	0 & 0 & 0 & \dots & 0 & -\lambda &1\\
	0 & 0 &0 & 0 & \dots & 0 & -\lambda + \frac{1}{\lambda^{r-1} (\lambda-1)}
	\end{array}
	\right).\]
	
	It is clear that $\textrm{det}(B)=0$ as an equation has the same roots as $\textrm{det}(A^{\top} - \lambda I)=0$. Since $B$ is an upper triangular matrix, it follows that \[\textrm{det}(B) = (-\lambda)^{r-1}(1-\lambda)\left(-\lambda + \frac{1}{\lambda^{r-1} (\lambda-1)}\right).\]
	We conclude that the eigenvalues of $A^{\top}$ (and hence of $A$) must be roots of $(\lambda^r -\lambda^{r-1})\lambda-1$ and the claim follows.
\end{proof}

\section{The Heterogeneity of the Logistic Map}\label{sec:logistic_map}
In this section, we illustrate how the compositions of the logistic map $f(x;r):=rx(1-x)$ behaves as $r$ varies slightly. We give certain examples in the form of \hyperref[fig:period234]{Figure \ref{fig:period234}}. It is known that the map when $r=3.9$, has a point of period 3. In contrast when $r$ is reduced to $3.5$ the map has a point of period 4 and further bringing $r$ down to $3.2$ will ensure that the map has a point of period 2. The figures below illustrate how the oscillations grow under these scenarios.
\begin{figure}
	\centering
	\begin{subfigure}[b]{0.475\textwidth}
		\centering
		\includegraphics[width=\textwidth]{f1x_logistic_map_3point9}
		\caption[]%
		{{\small Here $f(x;3.9):=3.9x(1-x)$ is shown.}}
		\label{}
	\end{subfigure}
	\hfill
	\begin{subfigure}[b]{0.475\textwidth}
		\centering
		\includegraphics[width=\textwidth]{}
		\caption[]%
		{{\small Here $f^6(x;3.9)$ is shown.}}
		\label{}
	\end{subfigure}
	\vskip\baselineskip
	\begin{subfigure}[b]{0.475\textwidth}
		\centering
		\includegraphics[width=\textwidth]{}
		\caption[]%
		{{\small Here $f(x;3.5):=3.5x(1-x)$ is shown.}}
		\label{}
	\end{subfigure}
	\quad
	\begin{subfigure}[b]{0.475\textwidth}
		\centering
		\includegraphics[width=\textwidth]{}
		\caption[]%
		{{\small Here $f^6(x;3.5)$ is shown.}}
		\label{}
	\end{subfigure}
	\vskip\baselineskip
	\begin{subfigure}[b]{0.475\textwidth}
		\centering
		\includegraphics[width=\textwidth]{}
		\caption[]%
		{{\small Here $f(x;3.2):=3.2x(1-x)$ is shown.}}
		\label{}
	\end{subfigure}
	\quad
	\begin{subfigure}[b]{0.475\textwidth}
		\centering
		\includegraphics[width=\textwidth]{}
		\caption[]%
		{{\small Here $f^6(x;3.2)$ is shown.}}
		\label{}
	\end{subfigure}
	\caption[Compositions of a piecewise linear function that has a point of period 5.]
	{\small The compositions of the logistic map $f(x;r):=rx(1-x)$ with different parameter values are shown here. The left column has the functions themselves while the right column shows the corresponding compositions. We can see that oscillations in these family of functions vary vastly with changes in $r$ and these changes are made in the weights of an appropriate neural network (see \cite{Telgarsky16},\cite{schmitt2000lower}).}
	\label{fig:period234}
\end{figure}

\end{document}